\documentclass{article}

 \usepackage[preprint]{neurips_2026}
\usepackage{amsmath, amssymb, amsthm}
\usepackage{multirow}
\usepackage{wrapfig}
\usepackage{xcolor}  
\usepackage{tcolorbox}
\tcbset{
  colback=blue!8!white,
  colframe=blue!8!white, 
  boxrule=0pt,
  arc=4pt,
}
\usepackage{tabularx}
\usepackage{xcolor}
\usepackage{tcolorbox}
\usepackage{wrapfig}
\usepackage[utf8]{inputenc} 
\usepackage[T1]{fontenc}    
\usepackage{hyperref}       
\usepackage{url}            
\usepackage{booktabs}       
\usepackage{amsfonts}       
\usepackage{nicefrac}       
\usepackage{microtype}      
\theoremstyle{plain}
\newtheorem{theorem}{Theorem}[section]
\newtheorem{lemma}[theorem]{Lemma}

\newtheorem{proposition}[theorem]{Proposition}

\newtheorem{definition}[theorem]{Definition}

\usepackage{enumitem}

\definecolor{deepblue}{RGB}{20,50,120}
\definecolor{softblue}{RGB}{220,230,255}
\definecolor{deepgreen}{RGB}{10,100,50}
\definecolor{softgreen}{RGB}{215,240,225}
\definecolor{darkgray}{RGB}{60,60,60}
\definecolor{manifoldred}{RGB}{180,30,30}
\definecolor{tanblue}{RGB}{30,80,200}

\newcommand{\M}{\mathcal{M}}

\newcommand{\Exp}{\mathrm{Exp}}
\newcommand{\Log}{\mathrm{Log}}

\usepackage[table]{xcolor}

\definecolor{bestblue}{RGB}{189,215,238}
\definecolor{secondblue}{RGB}{221,235,247}
\definecolor{thirdgrayblue}{RGB}{239,243,246}
\definecolor{baselinegray}{RGB}{245,245,245}
\definecolor{driftblue}{RGB}{232,242,255}

\newcommand{\best}[1]{\cellcolor{bestblue}#1}
\newcommand{\second}[1]{\cellcolor{secondblue}#1}
\newcommand{\third}[1]{\cellcolor{thirdgrayblue}#1}

\newtcolorbox{definitionbox}[1]{
  paperbox,
  colback=gray!5,
  colframe=gray!45,
  title={Definition: #1}
}

\newtcolorbox{resultbox}[1]{
  paperbox,
  colback=blue!3,
  colframe=blue!45!black,
  title={#1}
}

\newtcolorbox{takeawaybox}[1]{
  paperbox,
  colback=orange!4,
  colframe=orange!55!black,
  title={#1}
}

\usepackage{comment}
\usepackage{arydshln}
\usepackage{float}
\usepackage{graphicx}
\title{Kernel-Gradient Drifting Models}

\author{%
  Maria Esteban-Casadevall \thanks{Equal contribution} \, \thanks{ \texttt{m.estebancasadevall@uva.nl}}\\
  AMLab\\
  University of Amsterdam\\
  \And
  Jorge Carrasco-Pollo$^*$  \\
  University of Amsterdam \\
   \AND
   Max Welling \\
   CuspAI \\
  UvA-Bosch Delta Lab, AMLab \\
  University of Amsterdam \\
   \And
   Jan-Willem van de Meent\\
  UvA-Bosch Delta Lab, AMLab \\
  University of Amsterdam \\
     \And
    Erik J. Bekkers \\
    AMLab \\
    University of Amsterdam \\
   \And
   Floor Eijkelboom \\
  UvA-Bosch Delta Lab, AMLab \\
  University of Amsterdam \\
}

\begin{document}

\maketitle
\begin{abstract}
We propose kernel-gradient drifting, a one-step generative modeling framework that replaces the fixed Euclidean displacement direction in drifting models with directions induced by the kernel itself. Standard drifting is attractive because it enables fast, high-quality generation without distilling a large pretrained diffusion model, but its theory is currently understood mainly for Gaussian kernels, where the drift coincides with smoothed score matching and is identifiable. Our gradient-based reformulation exposes this score-based structure for general kernels: the resulting drift is the score difference between kernel-smoothed data and model distributions, yielding identifiability for characteristic kernels and a smoothed-KL descent interpretation of the drifting dynamics. Since kernel gradients are intrinsic tangent vectors, the same construction extends naturally to Riemannian manifolds and to discrete data via the Fisher-Rao geometry of the probability simplex. Across spherical geospatial data, promoter DNA and molecule generation, kernel-gradient drifting enables state-of-the-art one-step generation beyond the Euclidean setting without distillation.
\end{abstract}

\section{Introduction}
Many generative models rely on transport-based methods that map a simple prior to a complex data distribution \citep{Lipman_2023, Ho2020DenoisingDiffusion, Song2021ScoreBased}. These methods are empirically strong and well understood, but sampling typically requires solving an SDE or ODE through many sequential updates. This has motivated a growing line of work on one-step or few-step generation, either by distilling a pretrained model \citep{Salimans2022ProgressiveDistillation, Luo2023DiffInstruct, Yin_2024_DMD} or by training such generators directly through flow-map, consistency-style, or velocity-matching objectives \citep{boffi2025flowmap,boffi2025howto, Davis_2026_GFM, Roos_2026_CatFlowMaps, Geng2025MeanFlows, Geng2025ImprovedMeanFlows, Zhou2025TerminalVelocityMatching}. Drifting models~\citep{Deng_2026_Drift} instead train a one-step generator directly by moving model samples along a kernel-induced attraction and repulsion field. The kernel measures similarity between samples and determines the strength of their pairwise interactions, making it an important design choice for the model. In this way, drifting shifts the refinement that would normally occur at sampling time into the training procedure.


Current theory for drifting is best understood when working with Gaussian kernels. However, other non-Gaussian kernels, such as Laplace, seem to perform better empirically, making it important to understand drifting beyond the Gaussian case. In the Gaussian case, the original kernel-weighted displacement field coincides with a smoothed score-matching direction, which gives a clean \textit{identifiability} story: at the population level, vanishing drift implies that the model distribution matches the data distribution~\citep{Lai_2026_Unified,Turan_2026_Drifting}, an essential property for the model to be sound. At the same time, this result also exposes a point of tension in the original formulation. Drifting combines two choices: the kernel \(k(x,y)\), which determines how samples are weighted, and the Euclidean displacement \(y-x\), which determines the direction of motion. For Gaussian kernels this displacement direction is natural, as it points in the direction of steepest increase in similarity. For other kernels, however, the same displacement direction doesn't necessarily point in the direction of highest increase in similarity and, in general, doesn't ensure identifiability.  Consequently, it is  unclear when the model converges to the data distribution, what objective the drifting dynamics are optimizing, or how the method should be formulated on non-Euclidean domains.

\textbf{Contributions.} We address this by introducing \emph{kernel-gradient drifting}. Instead of using the kernel only to weight Euclidean displacements, we let the kernel itself define the direction of motion through its gradient. This recovers ordinary Gaussian drifting as a special case, while providing a more general and coherent formulation of drifting beyond Euclidean Gaussian kernels. The key consequence is that drifting is no longer tied to a particular displacement rule or ambient Euclidean geometry, but becomes a kernel-defined gradient flow. In particular, this re-formulation  extends the score-based interpretation to general kernels: the resulting drift is exactly the difference between the scores of kernel-smoothed data and model distributions. This gives identifiability whenever the smoothing kernel is characteristic, and yields a variational interpretation of the smoothed drifting dynamics as descent of a smoothed Kullback--Leibler divergence. Because kernel gradients are intrinsic tangent vectors, the same construction also extends naturally to Riemannian manifolds, and allows for discrete data generation through the Fisher--Rao geometry of the probability simplex. 

We evaluate our ideas across three complementary settings \footnote{Code available at \url{https://anonymous.4open.science/r/kernel-grad-drift-B4D5}.}. Controlled synthetic experiments test whether the gradient drift direction improves performance. The Earth event modeling on \(\mathbb{S}^2\) tests how different kernel choices behave on real spherical data, and promoter DNA generation and molecule generation (through SMILES strings) tests whether the simplex construction gives a practical route to one-step discrete generation. Across these settings, kernel-gradient drifting provides a principled extension of drifting beyond the Gaussian Euclidean case, while remaining teacher-free and competitive with one-step flow and distillation-based baselines.
\begin{figure}[ht!]
\centering
\includegraphics[width=1.0\linewidth]{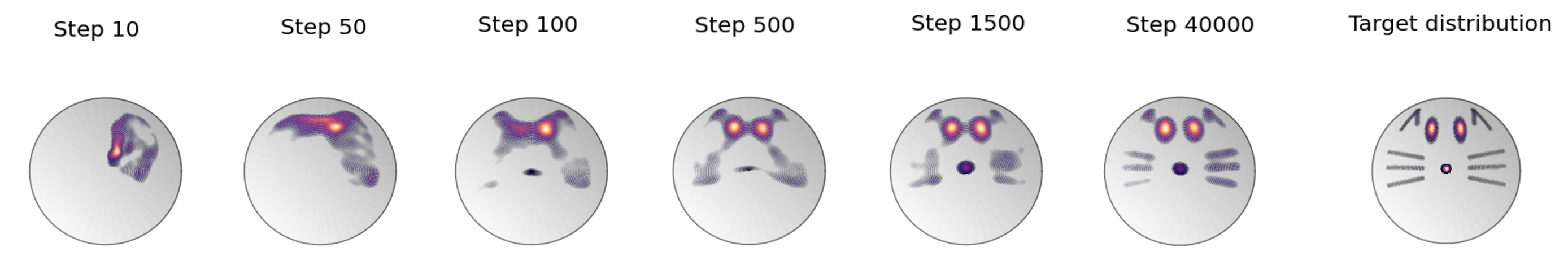}
\caption{
Riemannian kernel-gradient drifting on \(\mathbb{S}^2\). Samples are transported
along tangent vectors and mapped back to the sphere with the exponential map,
allowing drifting to respect the geometry of the data support.
}
\label{fig:spherical_cat}
\end{figure}

\section{Drifting and the Gaussian limitation}
\label{sec:background}

\paragraph{Drifting models.} Drifting models~\cite{Deng_2026_Drift} train one-step generators by moving the iterative refinement usually performed at sampling time into the training procedure. Let \(p\) be a target distribution on \(\mathbb{R}^d\), and let
\(f_\theta : \mathbb{R}^k \to \mathbb{R}^d\) be a generator with pushforward distribution
\(q_\theta = (f_\theta)_{\#}\mathcal{N}(0,I)\). Given a positive normalized kernel
\(k : \mathbb{R}^d \times \mathbb{R}^d \to \mathbb{R}_{>0}\), drifting defines a discrepancy field
\[
V_{p,q_\theta}(x) = V_p^{+}(x) - V_{q_\theta}^{-}(x),
\]
where
\begin{equation}\label{eq:drifting_field}
V_p^{+}(x)
=
\underbrace{\frac{\mathbb{E}_{y \sim p}\!\left[k(x,y)(y-x)\right]}
     {\mathbb{E}_{y \sim p}\!\left[k(x,y)\right]}}_{\text{attraction to data}},
\qquad
V_{q_\theta}^{-}(x)
=
\underbrace{\frac{\mathbb{E}_{x' \sim q_\theta}\!\left[k(x,x')(x'-x)\right]}
     {\mathbb{E}_{x' \sim q_\theta}\!\left[k(x,x')\right]}}_{\text{self-repulsion}}.
\end{equation}
Thus, \(V_{p,q_\theta}\) has an attractive--repulsive structure: it \textit{attracts} samples towards regions where the data distribution is locally better represented than the current model distribution, while \textit{repelling} model samples from each other to ensure coverage. During training, a noise sample \(\epsilon \sim \mathcal{N}(0,I)\) is mapped to
\(x = f_\theta(\epsilon)\), and transported to
\begin{equation}\label{eq:displacement}
\widetilde{x}
=
x + \eta V_{p,q_\theta}(x),
\end{equation}
where $\eta \in \mathbb{R}$ is a step-size. The generator \(f_\theta\) is trained via a stop-gradient loss, whose \textit{value} is
\begin{equation}
\label{eq:drifting_loss}
\mathcal{L}_{\mathrm{drift}}(\theta)
=
\mathbb{E}_{\epsilon \sim \mathcal{N}(0,I)}
\left\|
f_\theta(\epsilon)
-
\mathrm{sg}\!\left(\widetilde{x}\right)
\right\|_2^2=\eta^2
\mathbb{E}_{x \sim q_\theta}
\left\|
V_{p,q_\theta}(x)
\right\|_2^2.
\end{equation}
The transported samples are therefore treated as a frozen target, and the next generator trains on this transported sample cloud. \citep{Turan_2026_Drifting} shows that applying stop-gradient is algorithmically necessary for ensuring convergence of the method.

\paragraph{Identifiability.}
For drifting to define a meaningful population objective, vanishing drift should imply that the model distribution has matched the data distribution. Otherwise, the generator could reach a fixed point even when \(q_\theta \neq p\). We refer to this property as \emph{identifiability}: the drift operator \(V_{p,q_\theta}\) is identifiable if
\begin{equation}
\label{eq:identifiability}
V_{p,q_\theta}(x)=0 \;\; \text{for all } x
\qquad \Longrightarrow \qquad
p=q_\theta.
\end{equation}


\paragraph{The Gaussian case.}
Concurrent work has shown that drifting has a clean score-based interpretation for Gaussian kernels~\cite{Lai_2026_Unified,Turan_2026_Drifting}. Consider
\[
k_\tau(x,y)
=
\frac{1}{(2\pi\tau^2)^{d/2}}\exp\!\left(
-\frac{\|x-y\|_2^2}{2\tau^2}
\right).
\]
for a temperature value $\tau > 0$. For this kernel, 

\[
\nabla_x k_\tau(x,y) =
\frac{1}{\tau^2} k_\tau(x,y)(y-x).\]

Hence the displacement direction used in \eqref{eq:drifting_field} is proportional to the gradient of the kernel itself. If we define the Gaussian convolved density
\[
\widehat{p}_\tau(x) = \mathbb{E}_{y\sim p}\!\left[k_\tau(x,y)\right],  \qquad
 \widehat{q}_{\theta, \tau}(x)
=
\mathbb{E}_{x'\sim q_\theta}\!\left[k_\tau(x,x')\right], 
\]
then
\[
V_{p}^+(x) = \tau^2 \nabla_x \log \widehat{p}_\tau(x)\qquad
 V_{q_\theta}^-(x)= \tau^2 \nabla_x \log \widehat{q}_{\theta, \tau}(x) .
\]
Consequently, Gaussian drifting satisfies
\[
V_{p,q_\theta}(x)
=
\tau^2
\left(
\nabla_x \log \widehat{p}_\tau(x)
-
\nabla_x \log \widehat{q}_{\theta,\tau}(x)
\right)
\]  and the training objective reduces to one-step smoothed score matching in reverse-Fisher form. Vanishing drift implies equality of the smoothed scores, and injectivity of the Gaussian kernel yields \(p=q_\theta\). This gives Gaussian drifting an identifiable population fixed point. This formulation also reduces the drifting objective to the \textit{score-difference} setting described in \citep{weber2023score}, connecting kernel drifting methods to previous generative models literature. 

\section{Kernel-gradient drifting}
\label{sec:kernel_gradient_drifting}

The previous section shows that Gaussian drifting is special because the kernel-weighted displacement direction coincides with a smoothed score direction. This suggests that the natural object is not the Euclidean displacement
\(y-x\) itself, but the direction induced by the kernel. In this section, we use this observation to define \textit{kernel-gradient drifting}.

\subsection{What goes wrong in the original formulation?}
The Gaussian result also exposes the limitation of the original formulation. Equation~\eqref{eq:drifting_field} combines two separate design choices: the kernel \(k(x,y)\), which determines how nearby samples are weighted, and the Euclidean displacement \(y-x\), which determines the direction of motion. For Gaussian kernels these two choices are compatible, because the displacement-weighted update is proportional to a kernel-gradient direction. As a consequence, the direction of the drift field is exactly the mean-shift direction of the smoothed density ratio
\(\hat p_\tau / \hat q_{\theta,\tau}\) and, thus, the model moves samples toward regions where the smoothed data distribution is locally under-represented relative to the smoothed model distribution. For a general kernel, however,
\[
k(x,y)(y-x)
\not\propto
\nabla_x k(x,y)
\]
and the induced direction no longer coincides with the mean-shift direction. Instead, it is biased toward distant neighbors, whose influence is amplified by the distance-dependent directional term (see Figure \ref{fig: Figure1} and Appendix~\ref{app: original formulation} for details). This perturbation becomes more pronounced at larger temperatures, since farther points contribute more strongly, and on manifolds with high curvature. It is precisely this perturbation that breaks the identifiability property for non-Gaussian kernels. This is especially relevant because non-Gaussian kernels, such as Laplace kernels, often perform well in practice.  \citet{Turan_2026_Drifting} attribute this behavior to the spectral properties of the Laplace kernel which make its Fourier modes decay polynomially rather than exponentially, unlike in the Gaussian case, which helps preserve finer details of the distribution.  


\subsection{The kernel-gradient drift field}

Let \(k : \mathbb{R}^d \times \mathbb{R}^d \to \mathbb{R}_{>0}\) be a positive normalized
kernel differentiable in its first argument. Define
the \textit{kernel-gradient drift}
\begin{equation}
\label{eq:kernel_gradient_component}
G_{p}^+(x)
=
\frac{
\mathbb{E}_{y \sim p}\!\left[\nabla_x k(x,y)\right]
}{
\mathbb{E}_{y \sim p}\!\left[k(x,y)\right]
}, \qquad G_{q_\theta}^-(x) = \frac{
\mathbb{E}_{x' \sim q_\theta}\!\left[\nabla_x k(x,x')\right]
}{
\mathbb{E}_{x' \sim q_\theta}\!\left[k(x,x')\right]}.
\end{equation}
We then define the kernel-gradient drift as
\begin{equation}
\label{eq:gradient_drifting}
V^\nabla_{p,q_\theta}(x)
=
G_{p}^+(x) - G_{q_\theta}^-(x)
=
\frac{
\mathbb{E}_{y \sim p}
\!\left[
\nabla_x k(x,y)
\right]
}{
\mathbb{E}_{y \sim p}
\!\left[
k(x,y)
\right]
}
-
\frac{
\mathbb{E}_{x' \sim q_\theta}
\!\left[
\nabla_x k(x,x')
\right]
}{
\mathbb{E}_{x' \sim q_\theta}
\!\left[
k(x,x')
\right]
}.
\end{equation}
When the kernel is Gaussian, this recovers the original drifting field up to the
constant temperature factor $\tau^2$. Training proceeds exactly as described in Section \ref{sec:background}.

\begin{figure}[ht!]
\centering
\includegraphics[width=0.95\linewidth]{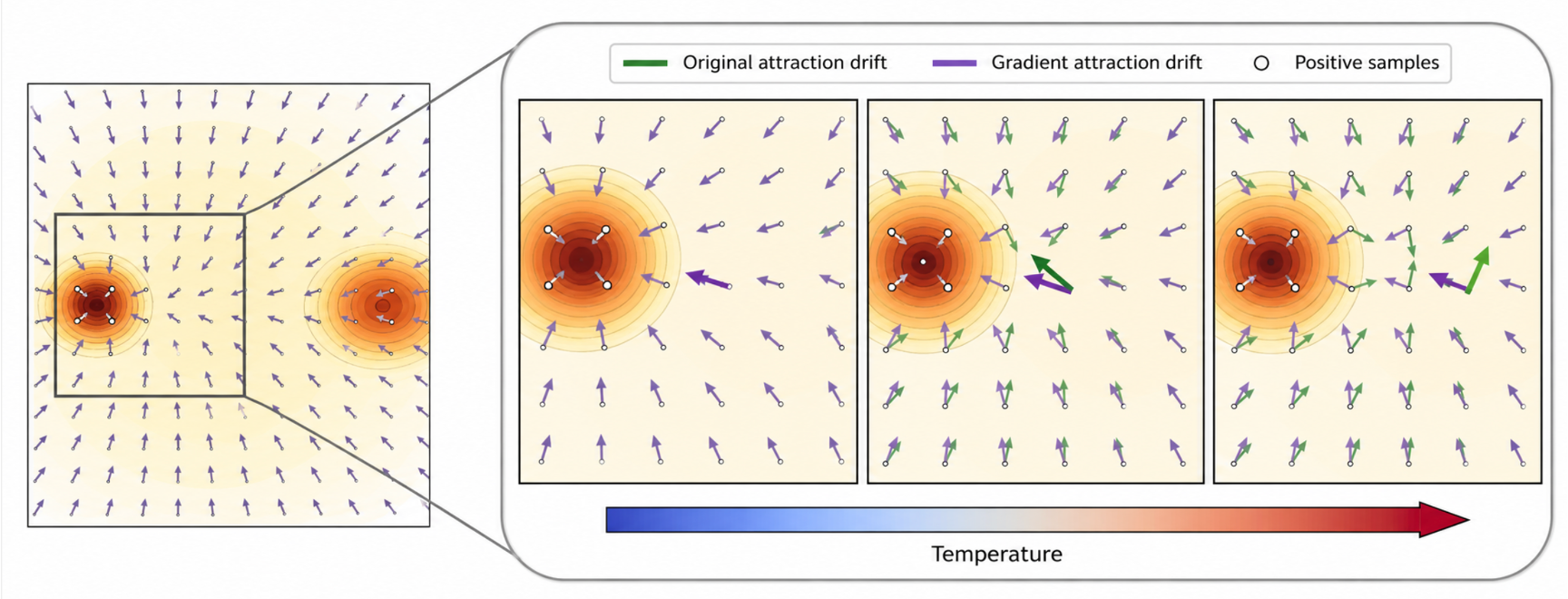}
\caption{Attraction drift direction induced by the Laplacian kernel. Under the original formulation, shown by the green arrows, the drift does not point towards the region of highest density ratio between $p$ and $q_\theta$. Instead, its direction is biased toward distant samples. This phenomenon becomes more pronounced as the temperature increases, since distant samples receive greater weight. At small temperatures, the two directions are approximately aligned.}
\label{fig: Figure1}
\end{figure}

\subsection{Smoothed score-ratio interpretation}

The main advantage of the gradient formulation is that it admits a score-based interpretation. Define the kernel-smoothed density
\[\label{eq:smoothed_density}
\widehat{p}_k(x)
= \int k(x,y)\,p(y)\,dy.\]

Note that when working in $\mathbb{R}^d$, this corresponds to applying a convolution operator to $p$ and $q_\theta$. This is not true on a curved manifold in general, as most  manifolds don't have a group structure. Then, under standard  regularity assumptions allowing differentiation under the
integral sign (see Appendix \ref{app: theoretical guarantees} for details), we have that $ G_{p}^+(x) = \nabla_x \log \widehat{p}_k(x).$ 
The same identity applied to negative samples coming from \(q_\theta\) leads to
\[
V^\nabla_{p,q_\theta}(x)
=
\nabla_x \log \widehat{p}_k(x)
-
\nabla_x \log \widehat{q}_{\theta,k}(x)
=
\nabla_x \log
\frac{\widehat{p}_k(x)}
     {\widehat{q}_{\theta,k}(x)}.
\]

This result is formalized in Proposition \ref{prop:score_ratio_matching}. See Appendix \ref{app: theoretical guarantees} for the formal statement, assumptions, and proof.

\definecolor{myboxcolor}{RGB}{252,245,224}

\begin{tcolorbox}[
    colback=blue!5,
    colframe=blue!5,
    boxrule=0pt,
    arc=2mm,
    left=2mm,
    right=2mm,
    top=2mm,
    bottom=2mm
]
\begin{proposition}[Kernel-gradient drifting is smoothed score-ratio matching]
\label{prop:score_ratio_matching}
Let \(k\) be a positive, normalized kernel differentiable in its first argument, and assume that
differentiation may be exchanged with integration. Then the kernel-gradient drift
defined in \eqref{eq:gradient_drifting} satisfies
\[
V^\nabla_{p,q_\theta}(x)
=
\nabla_x \log
\frac{\widehat{p}_k(x)}
     {\widehat{q}_{\theta,k}(x)}.
\]
Consequently,
\[
\mathcal{L}^{\nabla}_\mathrm{drift}(\theta)
=
\eta^2
\mathbb{E}_{x\sim q_\theta}
\left\|
\nabla_x \log \widehat{p}_k(x)
-
\nabla_x \log \widehat{q}_{\theta,k}(x)
\right\|_2^2.
\]
\end{proposition}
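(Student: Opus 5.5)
The argument is short: we prove the identity for the attraction term, reuse it verbatim for the repulsion term, and then substitute into the loss.

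\textbf{Attraction term.} Fix \(x\). By definition, \(\widehat{p}_k(x)=\int k(x,y)\,p(y)\,dy = \mathbb{E}_{y\sim p}[k(x,y)]\). Since \(k\) is positive, this quantity is strictly positive, provided the integral is finite. Finiteness is guaranteed by normalization of \(k\) together with \(p\) being a probability measure, or it can be taken as part of the regularity assumption. Hence \(\log \widehat{p}_k\) is well defined. Using the assumed exchange of differentiation and integration, we have
\[
\nabla_x \widehat{p}_k(x) = \int \nabla_x k(x,y)\,p(y)\,dy = \mathbb{E}_{y\sim p}\!\left[\nabla_x k(x,y)\right].
\]
The chain rule for the logarithm, applied to a positive differentiable function, then gives \(\nabla_x \log \widehat{p}_k(x) = \nabla_x \widehat{p}_k(x)/\widehat{p}_k(x)\). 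This is exactly \(G_p^+(x)\) as defined in \eqref{eq:kernel_gradient_component}.

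\textbf{Repulsion term and the drift.} The computation above used nothing about \(p\) beyond its being a probability distribution. Replacing \(p\) by \(q_\theta\) therefore gives \(G^-_{q_\theta}(x)=\nabla_x\log\widehat{q}_{\theta,k}(x)\). Subtracting the two identities as in \eqref{eq:gradient_drifting} yields the score difference. Linearity of the gradient, \(\nabla_x \log a - \nabla_x \log b = \nabla_x \log(a/b)\), then gives the log-ratio form stated in the proposition.

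\textbf{The loss.} The training procedure is the one from Section~\ref{sec:background}, with \(V^\nabla\) in place of \(V\). In \eqref{eq:drifting_loss}, the value of the stop-gradient loss equals
\[
\mathbb{E}_\epsilon\,\|f_\theta(\epsilon)-f_\theta(\epsilon)-\eta V^\nabla(f_\theta(\epsilon))\|^2 = \eta^2\,\mathbb{E}_{x\sim q_\theta}\|V^\nabla_{p,q_\theta}(x)\|_2^2,
\]
where the last step is a change of variables through the pushforward \(q_\theta=(f_\theta)_\#\mathcal{N}(0,I)\). Substituting the score-difference expression for \(V^\nabla\) gives the stated formula for \(\mathcal{L}^{\nabla}_\mathrm{drift}\).

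The only step that needs care is justifying the interchange \(\nabla_x\int=\int\nabla_x\), together with \(\widehat p_k>0\). The interchange is assumed in the statement. Concretely, I would take it to mean a dominating integrable bound on \(\sup_{x'\in U}\|\nabla_x k(x',y)\|\) over a neighborhood \(U\) of \(x\), which allows dominated convergence; this bound is what I would state in the appendix. For kernels that are non-differentiable at \(x=y\), such as the Laplace kernel, the gradient exists only almost everywhere. In that case I would note that the interchange still holds whenever \(p\) has no atom at \(x\).
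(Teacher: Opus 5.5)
Your argument is correct and is essentially the paper's proof. The paper's formal appendix version likewise differentiates \(\widehat p_k\) under the integral sign by dominated convergence, using an integrable bound on \(\|\nabla_x k(x,y)\|\) over a neighborhood of \(x\) exactly as you propose; it then applies the chain rule to \(\log\), repeats the computation for \(q_\theta\), and subtracts, and the loss identity follows from the stop-gradient value in \eqref{eq:drifting_loss}. One small caveat: normalization of \(k\) alone does not make \(\int k(x,y)\,p(y)\,dy\) finite at every \(x\) (it does when \(k\) is bounded), so keep your fallback of assuming integrability, which is the paper's second assumption.
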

\end{tcolorbox}

This connects kernel-gradient drifting models to the score-difference and mean-shift framework discussed in Section \ref{sec:background}, for any choice of kernel satisfying assumptions of Proposition \ref{prop:score_ratio_matching}.


\subsection{Identifiability with characteristic kernels}

The score-ratio identity gives a direct route to identifiability. If the drift
vanishes, then the smoothed data and model distributions have the same score. As we are working with normalized kernels, this implies that the smoothed densities are equal. The remaining question is whether equality of smoothed distributions implies equality of the original distributions. This is exactly the property of \textit{characteristic} kernels.

\begin{definition}[Characteristic kernel \cite{fukumizu2009characteristic}]
\label{def:characteristic_kernel}
A kernel \(k\) is characteristic on a class of probability distributions if the
smoothing map
\[
\pi
\mapsto
\widehat{\pi}_k,
\qquad
\widehat{\pi}_k(x)
=
\int k(x,y)\,\pi(y)\,dy,
\]
is injective. Equivalently, $
\widehat{p}_k(x)=\widehat{q}_k(x)
\quad \text{for all } x
\qquad
\Longrightarrow
\qquad
p=q.$
\end{definition}

\begin{tcolorbox}[
    colback=blue!5,
    colframe=blue!5,
    boxrule=0pt,
    arc=2mm,
    left=2mm,
    right=2mm,
    top=2mm,
    bottom=2mm
]
\begin{proposition}[Identifiability]
\label{prop:identifiability}
Assume that the kernel \(k\) is characteristic and satisfies the assumptions from Proposition \ref{prop:score_ratio_matching}. Then
\[
V^\nabla_{p,q_\theta}(x)=0
\quad \text{for all } x \qquad
\Longrightarrow
\qquad p=q_\theta.\]
\end{proposition}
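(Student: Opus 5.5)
The plan is to chain three implications: vanishing drift gives equal smoothed scores, equal scores give equal smoothed densities, and equal smoothed densities give equal distributions.

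\emph{Step 1.} By Proposition~\ref{prop:score_ratio_matching}, the hypothesis $V^\nabla_{p,q_\theta}\equiv 0$ is equivalent to
\[
\nabla_x \log \frac{\widehat{p}_k(x)}{\widehat{q}_{\theta,k}(x)} = 0 \quad\text{for all } x .
\]
The logarithm is well defined because $k>0$, so both $\widehat p_k$ and $\widehat q_{\theta,k}$ are strictly positive everywhere.

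\emph{Step 2.} Since $\mathbb{R}^d$ is connected, a function with identically zero gradient is constant. This gives $\widehat p_k = c\,\widehat q_{\theta,k}$ for some constant $c>0$.

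\emph{Step 3.} We fix $c=1$ using normalization. The kernel is normalized, i.e.\ $\int k(x,y)\,dx = 1$ for every $y$. By Tonelli's theorem (everything is nonnegative),
\[
\int \widehat p_k(x)\,dx = \int\!\!\int k(x,y)\,dx\,p(y)\,dy = 1 ,
\]
and the same holds for $\widehat q_{\theta,k}$. Integrating $\widehat p_k = c\,\widehat q_{\theta,k}$ over $\mathbb{R}^d$ then gives $c=1$, so $\widehat p_k=\widehat q_{\theta,k}$ pointwise.

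\emph{Step 4.} By Definition~\ref{def:characteristic_kernel}, the smoothing map $\pi\mapsto\widehat\pi_k$ is injective on the relevant class of distributions, so $p=q_\theta$. This requires $p$ and $q_\theta$ to belong to the class on which $k$ is characteristic, which I will take as part of the hypothesis.

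\emph{Main obstacle.} The delicate step is Step 3, which uses that $k$ is normalized in its \emph{first} argument. This is what makes the smoothed densities integrate to one.
\begin{itemize}
\item For translation-invariant kernels $k(x,y)=\kappa(x-y)$ this is automatic.
\item For a kernel normalized only in $y$, I would instead try to use integrability of the smoothed densities to force $c=1$.
\item Failing that, I would add first-argument normalization as an explicit assumption.
\end{itemize}
I will state whichever form is used, and note that on a manifold the same argument goes through provided the manifold is connected and the integrals are taken against the Riemannian volume.
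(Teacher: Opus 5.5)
Your proposal is correct and follows the paper's proof step for step: Proposition~\ref{prop:score_ratio_matching} turns vanishing drift into equality of the smoothed scores, normalization of $k$ upgrades this to $\widehat p_k=\widehat q_{\theta,k}$, and the characteristic property then gives $p=q_\theta$. You also make explicit two things the paper's one-line ``as $k$ is a normalized kernel'' step leaves implicit: connectedness of the domain, and normalization of $k$ in its first argument. Adopting the latter explicitly is the right choice, since with $m(y)=\int k(x,y)\,dx$ non-constant one only gets $c=\int m\,p\big/\!\int m\,q_\theta$, so integrability alone would not force $c=1$.
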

\end{tcolorbox}
Refer to Appendix \ref{app: theoretical guarantees} for the details on the proof. This result clarifies why an \textit{anti-symmetric} drift  is
important. If one considers an imbalanced field of the form $c\,G_{p}^+(x)-d\,G_{q}^-(x)$ for $ c\neq d,$  then vanishing drift would imply $ c\,\nabla \log \widehat{p}_k(x) = d\,\nabla \log \widehat{q}_k(x),$
which does not imply $\widehat{p}_k = \widehat{q}_k$ when $c\neq d$. Thus, the balanced
difference in \eqref{eq:gradient_drifting} is not merely a convention, but a necessary condition for distributional matching. This provides further insights into why \citet{Deng_2026_Drift} observed a catastrophic behavior when not working with anti-symmetric kernels. 

\subsection{Smoothed-KL descent}
The score-ratio identity also gives an optimization interpretation of the drifting dynamics. 
\begin{tcolorbox}[
    colback=blue!5,
    colframe=blue!5,
    boxrule=0pt,
    arc=2mm,
    left=2mm,
    right=2mm,
    top=2mm,
    bottom=2mm
]
\begin{proposition}[Smoothed-KL descent]
\label{prop:smoothed_kl_descent} Consider the drift update in the \textit{smoothed} variable \[z_{i+1}=z_i + \eta V_{p_\tau,q_{\theta,\tau}}(z_i),\] where  $z\sim \hat q_{\theta,k}.$  Then, the gradient drift direction gives the steepest infinitesimal decrease of the KL-divergence between the smoothed distributions $\hat p_k (z)$ and $\hat q_{\theta,k} (z)$. This corresponds to the Wasserstein gradient flow of the functional $D_{\mathrm{KL}}(\hat q_{\theta,k} \,\|\, \hat p_k)$.   \end{proposition}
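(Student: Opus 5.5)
We treat the smoothed model distribution \(\hat q_{\theta,k}\) as the object being transported, and compute the first variation of \(\mathcal{F}(\rho)=D_{\mathrm{KL}}(\rho\,\|\,\hat p_k)=\int \rho\log(\rho/\hat p_k)\,dz\) along pushforwards \(\rho_\epsilon=(\mathrm{id}+\epsilon v)_{\#}\rho\) for a smooth, square-integrable vector field \(v\). We then identify the optimal \(v\) under a unit \(L^2(\rho)\) norm constraint; by Proposition \ref{prop:score_ratio_matching} this optimum will coincide with \(V^\nabla_{p,q_\theta}\).

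\textbf{First variation.} By the continuity equation, \(\partial_\epsilon\rho_\epsilon|_{\epsilon=0}=-\nabla\cdot(\rho v)\). Differentiating under the integral gives \(\frac{d}{d\epsilon}\mathcal{F}(\rho_\epsilon)|_{\epsilon=0}=-\int \nabla\cdot(\rho v)\,\bigl(\log(\rho/\hat p_k)+1\bigr)\,dz\). The constant term integrates to zero. Integrating by parts, assuming enough decay at infinity for the boundary terms to vanish, gives
\[
\frac{d}{d\epsilon}\mathcal{F}(\rho_\epsilon)\Big|_{\epsilon=0}=\int \rho(z)\,\Bigl\langle v(z),\,\nabla_z\log\frac{\rho(z)}{\hat p_k(z)}\Bigr\rangle\,dz .
\]
This is \(\langle v,\nabla \frac{\delta\mathcal F}{\delta\rho}\rangle_{L^2(\rho)}\).

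\textbf{Steepest descent.} By Cauchy--Schwarz in \(L^2(\rho)\), over \(\|v\|_{L^2(\rho)}=1\) the derivative is minimised by \(v^\star\propto-\nabla\log(\rho/\hat p_k)=\nabla\log(\hat p_k/\rho)\). For \(\rho=\hat q_{\theta,k}\), Proposition \ref{prop:score_ratio_matching} gives \(v^\star\propto V^\nabla_{p,q_\theta}\). The step \(z_{i+1}=z_i+\eta V(z_i)\) is then the explicit Euler discretisation of \(\dot z=V(z)\). Along the exact flow, \(\frac{d}{dt}\mathcal F=-\mathbb E_{\hat q}\|\nabla\log(\hat p_k/\hat q_{\theta,k})\|^2\le 0\), and this dissipation equals the drifting loss divided by \(\eta^2\). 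The resulting continuity equation \(\partial_t\rho=\nabla\cdot\bigl(\rho\nabla\log(\rho/\hat p_k)\bigr)\) is by definition the Wasserstein-2 gradient flow of \(\mathcal F\), with Otto's calculus supplying the identification of the tangent metric with \(L^2(\rho)\). We also record the first-order Taylor statement \(\mathcal F(\rho_{i+1})=\mathcal F(\rho_i)-\eta\,\mathbb E\|V\|^2+O(\eta^2)\), which is valid under bounded second derivatives of \(V\).

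\textbf{Main obstacle.} The delicate point is interpretation rather than computation. The proposition describes moving points \(z\) distributed as \(\hat q_{\theta,k}\) and asks that \(\hat q_{\theta,k}\) itself evolve by pushforward. In reality the generator moves the unsmoothed samples \(x\sim q_\theta\), and smoothing does not commute with pushforward. I would therefore state explicitly that the result concerns the idealised dynamics in the smoothed variable, treating \(\hat q_{\theta,k}\) as a free density moved by \(V\); this is what the phrase ``in the smoothed variable'' signals.

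The remaining technical hypotheses are:
\begin{itemize}
\item positivity and \(C^1\) regularity of \(\hat p_k\) and \(\hat q_{\theta,k}\), which follow from positivity and differentiability of \(k\);
\item vanishing boundary terms in the integration by parts;
\item finite relative Fisher information, \(\mathbb E_{\hat q}\|V\|^2<\infty\).
\end{itemize}
I would impose these as standing regularity assumptions.
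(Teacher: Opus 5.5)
Your argument is correct. At its core it is the same computation as the paper's, done from the Eulerian rather than the Lagrangian side.

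Following Weber, the paper differentiates $D_{\mathrm{KL}}(\hat q_{\theta,k}\,\|\,\hat p_k)$ along $z\mapsto z+\varepsilon V(z)$ by change of variables. This gives the Stein-operator form $-\mathbb{E}_{\hat q_{\theta,k}}[(\nabla\log\hat p_k)^{\top}V+\nabla\cdot V]$. The paper then invokes Stein's identity $\mathbb{E}_{\hat q_{\theta,k}}[\nabla\cdot V]=-\mathbb{E}_{\hat q_{\theta,k}}[(\nabla\log\hat q_{\theta,k})^{\top}V]$ to reach the inner product with the score difference. Your continuity equation plus one integration by parts is exactly that identity.

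What your version adds is in the final steps. The paper argues pointwise that parallel vectors maximise the inner product, which only defines a steepest direction once a norm is fixed. It also never justifies the closing Wasserstein claim. Your unit-$L^2(\hat q_{\theta,k})$ constraint with Cauchy--Schwarz, together with Otto's identification of that norm as the Wasserstein tangent metric, supplies both. It also makes clear that this choice of metric is what turns the steepest-descent direction into the Wasserstein gradient flow. An RKHS norm, as in SVGD, would instead yield a kernelised direction.

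Two side remarks should be corrected, though neither affects the proposition.
\begin{enumerate}
\item The dissipation $\mathbb{E}_{z\sim\hat q_{\theta,k}}\|V^\nabla_{p,q_\theta}(z)\|^2$ is the relative Fisher information of the smoothed laws, not $\mathcal{L}^{\nabla}_{\mathrm{drift}}/\eta^2$. The loss in Proposition~\ref{prop:score_ratio_matching} averages over unsmoothed samples $x\sim q_\theta$; this is the same mismatch you flag as the main obstacle.
\item The $O(\eta^2)$ remainder in your Taylor statement is not governed by second derivatives of $V$. It is governed by bounds on $\nabla V$, so that $\mathrm{id}+\eta V$ is a diffeomorphism with a controlled log-determinant, and by bounds on $\nabla^2\log\hat p_k$.
\end{enumerate}
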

\end{tcolorbox}

The proof follows from \cite{weber2023score} and is included in Appendix \ref{app: theoretical guarantees} for completeness. This result should be read together with Proposition~\ref{prop:identifiability}.
The gradient-drifting infinitesimally descends a smoothed distributional objective, and characteristic kernels ensure that matching the smoothed distributions identifies the original distributions.  

It is important, however, to distinguish this smoothed-law interpretation from
the original particle dynamics of \citep{Deng_2026_Drift}. Although the original updates act on unsmoothed samples, adding noise to samples is a common construction in generative models to improve training stability \citep{sonderby2017amortised, mescheder2018which}, as this mitigates the infinite KL divergence that can arise when the data and model distributions have non-overlapping support. Thus, Proposition~\ref{prop:smoothed_kl_descent} provides a useful interpretation of
drifting as descent on a noise-regularized distributional objective. A full
analysis of its impact on the training stability of drifting models is left as
future work.

\section{Kernels, geometry, and discrete data}
\label{sec:kernels_geometry_discrete}

The previous section shows that kernel-gradient drifting is identifiable when the smoothing kernel is characteristic. This turns kernel choice into a central design decision. In Euclidean space, many familiar kernels are characteristic. On curved manifolds, however, kernel construction is more delicate, and it has been approached in many different ways depending on the space on which the kernel is defined: in the ambient space \cite{ozakin2009submanifold,BaePolonik2026KernelSmoothing}, in the Reproducing Kernel Hilbert Space induced by the manifold heat kernel \cite{Caseiro2012SemiIntrinsicMeanShift}, in the tangent space and then projected back to the manifold with the exponential map \cite{KimPark2013GeometricStructures, WuWu2022StrongUniformConsistency}, or in the manifold itself, either through geodesic distances \cite{pelletier2005kernel} or spectral kernels \cite{Cleanthous2020KernelWaveletDensity}. This section focuses on spectral kernels and uses our gradient formulation to extend drifting to Riemannian manifolds and categorical data.

\subsection{Which kernels are valid?}
\label{sec:valid_kernels}
In Euclidean space, translation-invariant kernels with non-vanishing Fourier
transform are characteristic. On manifolds, the situation is more subtle. A tempting construction is the family of exponential radial geodesic kernels
\[
k_g(x,y)
=
\exp\!\left(
-\tau d_g(x,y)
\right),
\qquad
\tau>0.
\]
Although this resembles the Euclidean Gaussian kernel, it is not positive
definite in general on curved manifolds. In fact, these kernels are positive definite for all temperatures if and only if the space is flat~\citep{feragen2015geodesic,
jayasumana2015kernel,steinert2025universal}. Thus, geodesic radial kernels can be useful in practice, especially when the distance is known in closed form and cheap to compute, but they do not provide the identifiability guarantees of Proposition~\ref{prop:identifiability}. This further motivates the gradient reformulation of the drifting field: on general geometries, the original formulation is not identifiable over all temperature values for any geodesic radial kernel.

\subsection{Spectral and Matérn kernels}
\label{sec:spectral_matern_kernels}

Let \((\M,g)\) be a compact Riemannian manifold, and let
\(\{(\lambda_n,\phi_n)\}_{n=0}^{\infty}\) denote the eigenpairs of the
Laplace--Beltrami operator. Spectral kernels are defined by
\begin{equation}
\label{eq:spectral_kernel}
k(x,x')
=
\sum_{n=0}^{\infty}
f(\lambda_n)\,\phi_n(x)\phi_n(x'),
\end{equation}
where \(f : [0,\infty)\to \mathbb{R}_{+}\) is a spectral density. If
\(f(\lambda_n)>0\) for every eigenvalue, then  the kernel is characteristic on the corresponding function class. A particularly useful family is given by Matérn kernels on manifolds
~\citep{borovitskiy2020matern}. For parameters \(\sigma^2,\tau,\nu>0\), one
can define
\begin{align}
\label{eq:matern_kernel}
k_{\nu}(x,x')
&=
\sigma^2 C_{\nu}
\sum_{n=0}^{\infty}
\left(
\frac{2\nu}{\tau^2}+\lambda_n
\right)^{-\left(\nu+\frac{d}{2}\right)}
\phi_n(x)\phi_n(x'),
\\
\label{eq:heat_kernel}
k_{\infty}(x,x')
&=
\sigma^2 C_{\infty}
\sum_{n=0}^{\infty}
\exp\!\left(
-\frac{\tau^2}{2}\lambda_n
\right)
\phi_n(x)\phi_n(x').
\end{align}
The constants \(C_\nu\) and \(C_\infty\) are normalization constants. The limit \(k_\infty\) corresponds to the heat kernel. Since all
spectral coefficients are strictly positive, these kernels are characteristic
under the assumptions above, and therefore yield identifiable kernel-gradient
drifts. In practice, spectral kernels can be approximated by truncated expansion. On simple manifolds such as spheres, the eigenfunctions are available in closed
form through spherical harmonics. On meshes, graphs, and more general geometries, one can use numerical
eigenfunctions or finite-dimensional feature approximations, as implemented in
packages for geometric kernels such as ~\citep{JMLR:v26:24-1185}. This provides a practical
route to kernel-gradient drifting without relying on geodesic computations. 

\subsection{Intrinsic drifting on Riemannian manifolds}
\label{sec:riemannian_drifting}

The gradient formulation extends naturally to Riemannian manifolds. Let
\((\M,g)\) be a Riemannian manifold, and let
\(f_\theta:\mathbb{R}^k\to\M\) be a generator whose pushforward distribution is
\(q_\theta=(f_\theta)_{\#}\mathcal{N}(0,I)\). Given a kernel
\(k:\M\times\M\to\mathbb{R}_{>0}\), define
\begin{equation}
\label{eq:riemannian_gradient_component}
G_{p}^+(x)
=
\frac{
\mathbb{E}_{y\sim p}\!\left[\nabla_x^{\M} k(x,y)\right]
}{
\mathbb{E}_{y\sim p}\!\left[k(x,y)\right]
} \qquad  G_{q_\theta}^-(x)
=
\frac{
\mathbb{E}_{x'\sim q_\theta}\!\left[\nabla_x^{\M} k(x,x')\right]
}{
\mathbb{E}_{x'\sim q_\theta}\!\left[k(x,x')\right]
},
\end{equation}
where \(\nabla_x^{\M}\) is the Riemannian gradient. The \textit{Riemannian
kernel-gradient drift} is then
\begin{equation}
\label{eq:riemannian_gradient_drift}
V^\nabla_{p,q_\theta}(x)
=
G_{p}^+(x)-G_{q_\theta}^-(x),
\qquad
V^\nabla_{p,q_\theta}(x)\in T_x\M.
\end{equation}
Generated samples are transported using the exponential map,
$ \widetilde{x} = \Exp_x\!\left(
\eta V^\nabla_{p,q_\theta}(x)
\right)$. The corresponding \textit{value} of the stop-gradient objective is then
\begin{equation}
\label{eq:riemannian_loss}
\begin{aligned}
\mathcal{L}_{\M}(\theta)
&=
\mathbb{E}_{\epsilon\sim\mathcal{N}(0,I)}
\left[
d_g\!\left(
f_\theta(\epsilon),
\mathrm{sg}\!\left[
\Exp_{f_\theta(\epsilon)}
\!\left(
\eta V^\nabla_{p,q_\theta}(f_\theta(\epsilon))
\right)
\right]
\right)^2
\right]
\\
&=
\mathbb{E}_{\epsilon\sim\mathcal{N}(0,I)}
\left\|
\Log_{f_\theta(\epsilon)}
\!\left(
\mathrm{sg}\!\left[
\Exp_{f_\theta(\epsilon)}
\!\left(
\eta V^\nabla_{p,q_\theta}(f_\theta(\epsilon))
\right)
\right]
\right)
\right\|_g^2
\\
&=
\eta^2
\mathbb{E}_{x\sim q_\theta}
\left\|
V^\nabla_{p,q_\theta}(x)
\right\|_g^2,
\end{aligned}
\end{equation}

where we assume $\eta \|V^\nabla_{p,q_\theta}(x)\|_g < \mathrm{inj}(x)$, with $\mathrm{inj}$ denoting the injectivity radius at point $x$. Note that the Euclidean formulation is recovered by taking \(\M=\mathbb{R}^d\) with the
standard Euclidean metric, for which $\Exp_x(v)=x+v $ and $\Log_x(y)=y-x $.

\subsection{Discrete data via Fisher--Rao simplex geometry}
\label{sec:discrete_simplex}

The Riemannian formulation also gives a route to discrete generation. Following the framework from \citet{Davis_2024_Fisher}, consider the  $d$-dimensional probability simplex
\[
\Delta^d = \left\{ x \in \mathbb{R}^{d+1} \,\middle|\, \mathbf{1}^\top x = 1,\; x \geq 0 \right\}.
\]
Then, a categorical distribution $p(x)$ over $K = d+1$ categories can be represented in $\Delta^d$ by placing a Dirac mass $\delta_i$ with weight $p_i$ at each vertex $i \in \{0, \dots, d\}$. Denote  $
\mathring{\Delta}^d := \left\{ x \in \Delta^d \mid x > 0 \right\}$ for the relative interior of the simplex. We can endow this manifold with the Fisher-Rao metric to obtain a \textit{statistical manifold}, such that there exists an isometric map $\phi : \mathring{\Delta}^d \to S_+^d,$ from the simplex to the positive orthant of the hypersphere. Thus, to generate discrete data, one can just implement the Riemannian drifting model on the positive orthant of the sphere. For more details on this construction, refer to Appendix \ref{app: discrete generation}.

\section{Experiments}
\label{sec:experiments}

\textbf{Goal of the experiments.} Our experiments evaluate three aspects of kernel-gradient drifting. First, we test whether the gradient direction improves synthetic data generation and what is the effect of the Matérn smoothness parameter \(\nu\). Second, we assess the method on non-Euclidean real-world data under different kernel and geometry choices. Finally, we assess whether the Fisher--Rao geometry on the probability simplex provides a practical route to one-step discrete generation.

\textbf{Synthetic experiments.} We use controlled synthetic experiments to isolate the effect of drift direction and geometry. \emph{Base} versus \emph{Gradient} compares displacement-based and kernel-gradient drift, while \emph{Euclidean} versus \emph{Manifold} compares ambient updates with intrinsic manifold updates. We evaluate checkerboard and swissroll targets on \(\mathbb{S}^2\) and \(\mathbb{H}^2\), projecting samples to the corresponding 2D charts for metric computation. Note that for the Euclidean formulations we project the samples to the manifold before evaluation.

Table~\ref{tab:toy-table} shows that the gradient formulation improves over the base formulation in almost all controlled comparisons. The manifold formulation gives further gains in some cases, especially on checkerboard targets, though less consistently than the gradient correction.
\begin{table*}[h!]
  \centering
  \small
  \setlength{\tabcolsep}{4pt}
  \renewcommand{\arraystretch}{1.12}
  \caption{Drift quality on checkerboard and swissroll targets across spherical and hyperbolic 2D geometries using a Laplace kernel. Results reported as mean $\pm$ standard deviation over 3 seeds. \textit{SW} denotes sliced Wasserstein-2 distance, and \textit{Tile} is black-square accuracy on checkerboard targets.}
  \begin{tabular}{@{}lllcccc@{}}
  \toprule
  \textbf{Dataset} & \textbf{Manifold} & \textbf{Metric} &
  \multicolumn{2}{c}{\textbf{Euclidean drift}} &
  \multicolumn{2}{c}{\textbf{Manifold drift}} \\
  \cmidrule(lr){4-5}\cmidrule(l){6-7}
  & & & \textbf{Base} & \textbf{Gradient (ours)} & \textbf{Base} & \textbf{Gradient (ours)} \\
  \midrule

  \multirow{4}{*}{Checkerboard}
    & \multirow{2}{*}{Sphere}
    & SW $\downarrow$
    & {$0.031 \pm 0.009$}
    & \second{$0.012 \pm 0.000$}
    & $0.032 \pm 0.008$
    & \best{$\mathbf{0.011 \pm 0.002}$} \\

    & 
    & Tile $\uparrow$
    & {$0.76 \pm 0.03$}
    & \second{$0.86 \pm 0.01$}
    & {$0.80 \pm 0.04$}
    & \best{$\mathbf{0.87 \pm 0.01}$} \\

  \addlinespace[2pt]

    & \multirow{2}{*}{Hyperboloid}
    & SW $\downarrow$
    & \second{$0.026 \pm 0.003$}
    & \best{$\mathbf{0.025 \pm 0.001}$}
    & $0.063 \pm 0.022$
    & {$0.037 \pm 0.013$} \\

    &
    & Tile $\uparrow$
    & $0.79 \pm 0.01$
    & \second{$0.85 \pm 0.02$}
    & {$0.80 \pm 0.02$}
    & \best{$\mathbf{0.90 \pm 0.01}$} \\

  \midrule

  \multirow{2}{*}{Swissroll}
    & Sphere
    & SW $\downarrow$
    & \second{$0.018 \pm 0.002$}
    & \best{$\mathbf{0.014 \pm 0.002}$}
    & {$0.026 \pm 0.000$}
    & \best{$\mathbf{0.014 \pm 0.002}$ }\\
    
    & Hyperboloid
    & SW $\downarrow$  
    & \second{$0.041 \pm 0.001$ }
    & \best{$\mathbf{0.030 \pm 0.003}$}
    & {$0.042 \pm 0.007$ }
    & $0.044 \pm 0.015$ \\

  \bottomrule
  \end{tabular}
  
  \label{tab:toy-table}
\end{table*}

\begin{figure}[ht!]
    \centering
    \includegraphics[width=1.00\linewidth]{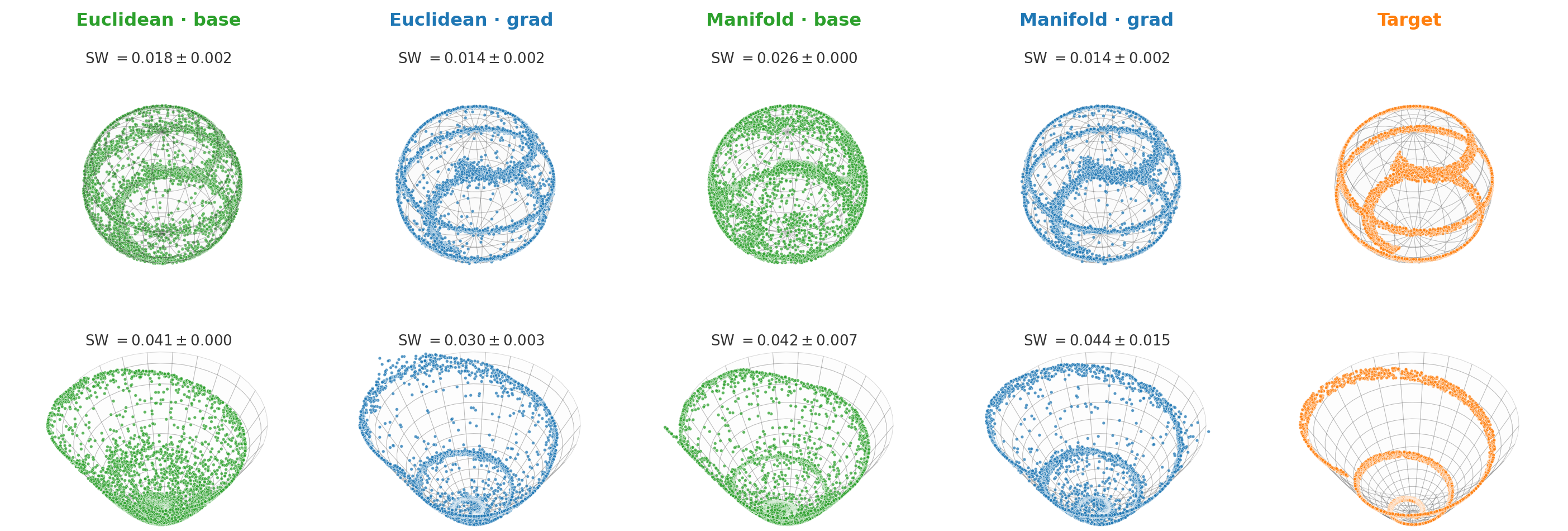}
    \caption{Generated distributions for the swissroll dataset on spherical and hyperboloid manifolds.}
    \label{fig:toy-comparison}
\end{figure}
\newpage
\begin{wrapfigure}{r}{0.50\linewidth}
    \centering
    \vspace{-1em}
    \includegraphics[width=\linewidth]{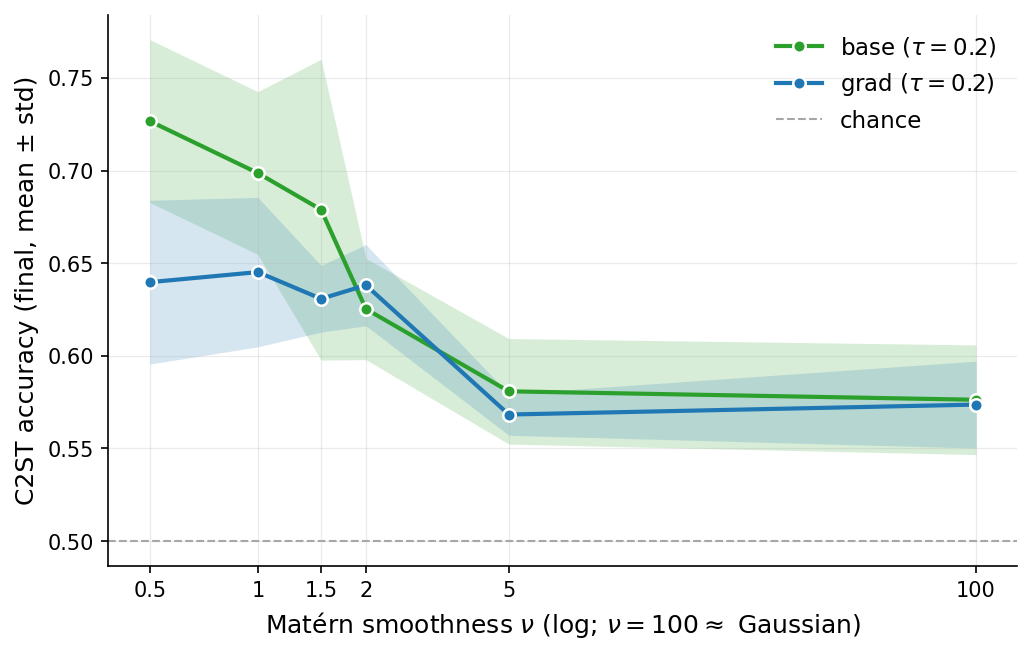}
    \caption{Classifier Two-Sample Test (C2ST) accuracy on the Euclidean Swiss-roll generation task for different Matérn smoothness values \(\nu\). Lower is better, with chance performance indicated at \(0.5\).}
    \label{fig:c2st-vs-nu}
    \vspace{-1em}
\end{wrapfigure}

To assess the effect of kernel choice, we evaluate Euclidean swissroll generation with Matérn kernels of varying smoothness $\nu$. Results are shown in Figure~\ref{fig:c2st-vs-nu}. The sweep confirms that performance of the two methods is equivalent for the Gaussian kernel, but that the gradient direction matters for the non-Gaussian ones, such as Laplace ($\nu$ = 0.5).

\textbf{Geospatial data on the sphere.}
We next evaluate on the geospatial Earth events benchmark on \(\mathbb{S}^2\)~\citep{Mathieu_2020_Earth}. 
This benchmark contains real-world distributions supported on the sphere, and tests whether the geometric formulation is useful beyond controlled synthetic examples. 

Table~\ref{tab:earth_mmd} compares the Euclidean Laplace drifting baseline with our gradient, spherical, and spectral variants. Since Gaussian displacement-based drifting is equivalent to its kernel-gradient form, we report only the Gaussian gradient variants. Here, \emph{Gradient} denotes the kernel-gradient formulation, while \emph{Spherical} uses the intrinsic geometry of \(\mathbb{S}^2\) for distances and updates. 
The results show that our kernel, direction, and geometry choices improve performance on curved real-world data, obtain the best Maximum Mean Discrepancy (MMD) on all four datasets. Additional metrics are reported in Appendix~\ref{app:additional-results}.

\begin{table*}[ht!]
\centering
\setlength{\tabcolsep}{4pt}
\small
\renewcommand{\arraystretch}{1.1}
\caption{MMD results on the four Earth datasets. Lower is better. The first row shows the vanilla Euclidean Laplace Drift baseline from \cite{Deng_2026_Drift}; the remaining rows show our gradient, spherical, and spectral extensions. Best, second-best, and third-best results are highlighted per dataset.}
\begin{tabular}{lcccc}
\toprule
\textbf{Method} & \textbf{Volcano} $\downarrow$ & \textbf{Earthquake} $\downarrow$ & \textbf{Fire} $\downarrow$ & \textbf{Flood} $\downarrow$ \\
\midrule

\quad Euclidean Laplace {\tiny \cite{Deng_2026_Drift}}
& 0.146 & 0.044 & \third{0.036} & \third{0.064} \\

\midrule

\quad Euclidean Laplace Gradient (ours)
& 0.128 & \second{0.038} & \second{0.030} & 0.067 \\

\quad Euclidean Gaussian Gradient (ours)
& 0.143 & 0.047 & 0.048 & 0.072 \\

\quad Spherical Laplace (ours)
& \second{0.113} & \third{0.043} & 0.049 & \third{0.064} \\

\quad Spherical Laplace Gradient (ours)
& \second{0.113} & 0.056 & 0.047 & \best{\textbf{0.053}} \\

\quad Spherical Gaussian Gradient (ours)
& \best{\textbf{0.112}} & 0.158 & 0.039 & \second{0.058} \\

\quad Spectral ($\nu = 2.5$)  (ours)
& \third{0.126} & \best{\textbf{0.037}} & \best{\textbf{0.029}} & 0.070 \\

\bottomrule
\end{tabular}
\label{tab:earth_mmd}
\end{table*}

\textbf{Discrete data generation.} Finally, we evaluate discrete generation on promoter DNA and QM9. For DNA, we generate sequences over the 4 nucleotides and report 6-mer correlation. For QM9, we follow \citet{park2026pairflow}, generating SMILES strings and reporting validity and uniqueness. In both settings we compare against one-step flow-distilled baselines. See Appendix~\ref{app:exp-setup} for setup details.
The results show that kernel-gradient drifting is a viable teacher-free approach to one-step categorical generation. On promoter DNA (Table~\ref{tab:dna_kmer_corr}), the kernel gradient improves over vanilla spherical drifting and reduces the gap with flow-distilled baselines. On QM9 (Table~\ref{tab:qm9_onestep}), our spherical gradient formulation substantially outperforms the Euclidean baseline and approaches the strongest distilled baselines, despite being teacher-free.

\newpage

\begin{table}[ht!]
\centering
\footnotesize
\setlength{\tabcolsep}{3.5pt}
\renewcommand{\arraystretch}{1.08}

\begin{minipage}[t]{0.38\linewidth}
\centering
\setlength{\tabcolsep}{3pt}
\renewcommand{\arraystretch}{1.04}
\caption{Promoter DNA generation results using 6-mer correlation ($\uparrow$). We report only the 1-NFE results from previous flow-based methods.}
\label{tab:dna_kmer_corr}
\begin{tabular}{@{}lc@{}}
\toprule
\textbf{Method} & \textbf{6-mer Corr.} $\uparrow$ \\
\midrule
\multicolumn{2}{@{}l}{\textit{Flow distillation}} \\
\rowcolor{baselinegray}
\quad E-RMF + v-pred {\tiny \cite{Woo_2026_RMeanFlow}} & $\textbf{0.96}$ \\
\rowcolor{baselinegray}
\quad E-RMF + x-pred {\tiny \cite{Woo_2026_RMeanFlow}} & $0.96$ \\
\rowcolor{baselinegray}
\quad S-RMF + v-pred {\tiny \cite{Woo_2026_RMeanFlow}} & $0.93$ \\
\rowcolor{baselinegray}
\quad S-RMF + x-pred {\tiny \cite{Woo_2026_RMeanFlow}} & $0.84$ \\
\rowcolor{baselinegray}
\quad L-RMF + v-pred {\tiny \cite{Woo_2026_RMeanFlow}} & $0.85$ \\
\rowcolor{baselinegray}
\quad L-RMF + x-pred {\tiny \cite{Woo_2026_RMeanFlow}} & $0.88$ \\
\midrule
\multicolumn{2}{@{}l}{\textit{Drifting}} \\
\rowcolor{driftblue}
\quad Spherical Laplace & $0.88$ \\
\rowcolor{driftblue}
\quad Spherical Laplace Gradient & $0.89$ \\
\rowcolor{driftblue}
\quad Spherical Gaussian Gradient & {$\textbf{0.90}$} \\
\bottomrule
\end{tabular}
\end{minipage}%
\hfill%
\begin{minipage}[t]{0.58\linewidth}
\centering
\caption{QM9 molecule generation results for one-step methods. Higher is better. Baseline percentages derived from $N=1{,}024$ batch samples.
\vspace{0.31cm} }
\label{tab:qm9_onestep}
\begin{tabular}{@{}lcc@{}}
\toprule
\textbf{Method} & \textbf{Valid (\%)} $\uparrow$ & \textbf{Unique (\%)} $\uparrow$ \\
\midrule
\multicolumn{3}{@{}l}{\textit{Flow distillation}} \vspace{2pt}\\
\rowcolor{baselinegray}
\rowcolor{baselinegray}
\quad UDLM (+DCD) {\tiny \citep{park2026pairflow}}
& 31.5 & 31.3 \\
\rowcolor{baselinegray}
\quad PairFlow (+DCD) {\tiny \citep{park2026pairflow}}
& {\textbf{44.3}} & {\textbf{44.1}} \\
\rowcolor{baselinegray}
\quad UDLM (+ReDi) {\tiny \citep{park2026pairflow}}
& 5.8 & 5.8 \\
\rowcolor{baselinegray}
\quad PairFlow (+ReDi) {\tiny \citep{park2026pairflow}}
& 35.3 & 35.1 \vspace{0.55cm}\\
\midrule
\multicolumn{3}{@{}l}{\textit{Drifting}} \vspace{2pt}\\
\rowcolor{driftblue}
\quad Euclidean Laplace \citep{Deng_2026_Drift}
& {22.0} & {40.0} \\
\rowcolor{driftblue}
\quad Spherical Laplace Gradient (ours)
& \textbf{38.9} & {\textbf{44.1}} \vspace{0.24cm}
 \\
\bottomrule
\end{tabular}
\end{minipage}

\end{table}

\section{Further Related Work}
\textbf{Riemannian and discrete generative modeling.} Transport-based models have also been adapted to support data on manifolds, and have shown that incorporating non-Euclidean geometry can improve the modeling of these distributions \citep{Chen_2024_RFM, Zaghen2025RiemannianVariationalFlowMatching, Davis_2026_GFM, Woo_2026_RMeanFlow}. In parallel, categorical data generation has been implemented through discrete-time diffusion  \citep{Austin2021StructuredDenoisingDiffusion, Sahoo2024MaskedDiffusionLanguageModels, Shi2024SimplifiedGeneralizedMaskedDiffusion}, continuous-time Markov diffusion \citep{Campbell2022ContinuousTimeDiscreteDenoising, Sun2023ScoreBasedDiscreteDiffusion}, and continuous flow-based relaxations or statistical-manifold formulations  \citep{Eijkelboom_2024, Davis_2024_Fisher, Dieleman2022ContinuousDiffusionCategorical, Cheng_2025_SFM}.

\textbf{Concurrent works on gradient-based drifting models.} The limitations of Gaussian-kernel drifting have also motivated concurrent theoretical work. \citet{Franz_2026_DriftingFields} analyze drifting from an optimization perspective and show that the original \citep{Deng_2026_Drift} style drift is conservative only for the Gaussian kernel. Therefore, it cannot be
written as the gradient of any scalar potential, in general. \citet{Cao_2026_GradientFlowDrifting} derive a kernel density estimation framework that unifies drifting models as gradient flows of divergence functionals. Our work differs in both emphasis and scope. Rather than developing a divergence-level theoretical unification, we focus on aligning drift with the mean-shift direction and evaluate our approach across a range of real-world experiments, whereas their work has not been studied in an experimental setting beyond 2D toy examples.

\section{Conclusion}
\label{sec:discussion_conclusion}
We introduced \emph{kernel-gradient drifting}, a generalization of the original drifting field in which the kernel defines the direction of sample motion through its gradient. This perspective gives a more coherent picture of drifting models: it recovers Gaussian drifting as a special case, extends the score-based interpretation to general kernels, and connects the drifting dynamics to descent of a smoothed Kullback--Leibler divergence. This construction naturally generalizes to data on manifolds and to discrete data generation. Empirically, experiments on synthetic data, spherical Earth event modeling, promoter DNA and molecule generation show that kernel-gradient drifting provides a flexible, teacher-free route to one-step generation. However, there are also some limitations to our work. For instance, our work only focuses on kernel methods, specifically on Matérn kernels, and we found the method to be sensitive to hyperparameter choice. An interesting avenue to explore is to extend the analysis to a broader kernel class and to perform a more systematic study of training stability, which may help close the gap between our method and state-of-the-art. 

\newpage
\section*{Acknowledgements}
This publication is part of the project SIGN (file number VI.Vidi.233.220), which is (partly) financed by the Dutch Research Council (NWO) under grant \url{https://doi.org/10.61686/PKQGZ71565}. JWvdM acknowledges support from the European Union Horizon Framework Programme (Grant agreement ID: 101120237). 

\section*{Ethics statement}
Generative models, can have harmful societal consequences, most notably through the dissemination of disinformation, as well as by amplifying harmful stereotypes and implicit biases. In this work, we aim to advance understanding of drifting models, a specific class of generative models. Although such insights could eventually contribute to improving these models and thereby potentially increase opportunities for misuse, our research does not introduce ethical risks beyond those already associated with generative AI.
\section*{Reproducibility statement}
We include the source code in our submission, which allows for reproducing the results. Our claims made in the main text are proven in the appendices. Experiment details can be found in Appendix \ref{app:additional-results}.
\section*{Disclosure of LLM Usage}
We have used Large Language Models to polish writing on a sentence level.

\bibliographystyle{plainnat}
\bibliography{refs.bib}

@misc{Lai_2026_Unified,
      title={A Unified View of Drifting and Score-Based Models}, 
      author={Chieh-Hsin Lai and Bac Nguyen and Naoki Murata and Yuhta Takida and Toshimitsu Uesaka and Yuki Mitsufuji and Stefano Ermon and Molei Tao},
      year={2026},
      journal = {arXiv preprint arXiv:2603.07514},
}

@article{Turan_2026_Drifting,
  title={Generative Drifting is Secretly Score Matching: a Spectral and Variational Perspective},
  author={Turan, Erkan and Ovsjanikov, Maks},
  journal={arXiv preprint arXiv:2603.09936},
  year={2026},

}

@article{Deng_2026_Drift,
      title={Generative Modeling via Drifting}, 
      author={Mingyang Deng and He Li and Tianhong Li and Yilun Du and Kaiming He},
      year={2026},
 
    journal = {arXiv preprint arXiv:2602.04770},

}

@inproceedings{
    Davis_2024_Fisher,
    title={Fisher Flow Matching for Generative Modeling over Discrete Data},
    author={Oscar Davis and Samuel Kessler and Mircea Petrache and Ismail Ilkan Ceylan and Michael M. Bronstein and Joey Bose},
    booktitle={The Thirty-eighth Annual Conference on Neural Information Processing Systems},
    year={2024},
}

@inproceedings{Eijkelboom_2024,
      title= "Variational flow matching for graph generation",
      booktitle = "Advances in Neural Information Processing Systems 37",
      author = "Eijkelboom, Floor and Bartosh, Grigory and Naesseth, Christian and van de Meent, Jan-Willem and Welling, Max",
      year            =  2024,
}

@inproceedings{sonderby2017amortised,
  author    = {S{\o}nderby, Casper Kaae and Caballero, Jose and Theis, Lucas and Shi, Wenzhe and Husz{\'a}r, Ferenc},
  title     = {Amortised {MAP} Inference for Image Super-Resolution},
  booktitle = {Proceedings of the International Conference on Learning Representations},
  year      = {2017}
}

@article{Davis_2026_GFM,
    title={Generalised Flow Maps for Few-Step Generative Modelling on Riemannian Manifolds}, 
    author={Oscar Davis and Michael S. Albergo and Nicholas M. Boffi and Michael M. Bronstein and Avishek Joey Bose},
    year={2025},
    journal = {arXiv preprint arXiv:2510.21608},
}

@article{Yin_2024_DMD,
  title={One-Step Diffusion with Distribution Matching Distillation},
  author={Tianwei Yin and Michael Gharbi and Richard Zhang and Eli Shechtman and Fr{\'e}do Durand and William T. Freeman and Taesung Park},
  journal={Conference on Computer Vision and Pattern Recognition (CVPR)},
  year={2024},
}

@article{Woo_2026_RMeanFlow,
      title={Riemannian MeanFlow}, 
      author={Dongyeop Woo and Marta Skreta and Seonghyun Park and Kirill Neklyudov and Sungsoo Ahn},
      year={2026},
      journal = {arXiv preprint arXiv:2602.07744},
}

@inproceedings{feragen2015geodesic,
author = {Feragen, Aasa and Lauze, Francois and Hauberg, Soren},
year = {2015},
pages = {3032-3042},
title = {Geodesic exponential kernels: When curvature and linearity conflict},
booktitle = {Conference on Computer Vision and Pattern Recognition (CVPR)},
}

@article{JMLR:v26:24-1185,
  author  = {Peter Mostowsky and Vincent Dutordoir and Iskander Azangulov and No{\'e}mie Jaquier and Michael John Hutchinson and Aditya Ravuri and Leonel Rozo and Alexander Terenin and Viacheslav Borovitskiy},
  title   = {The Geometric Kernels Package: Heat and Mat{\'e}rn Kernels for Geometric Learning on Manifolds, Meshes, and Graphs},
  journal = {Journal of Machine Learning Research},
  year    = {2025},
}

@article{Roos_2026_CatFlowMaps,
      title={Categorical Flow Maps}, 
      author={Daan Roos and Oscar Davis and Floor Eijkelboom and Michael Bronstein and Max Welling and İsmail İlkan Ceylan and Luca Ambrogioni and Jan-Willem van de Meent},
      year={2026},
     journal = {arXiv preprint arXiv:2602.12233},
}

@inproceedings{Mathieu_2020_Earth,
 author = {Mathieu, Emile and Nickel, Maximilian},
 booktitle = {Advances in Neural Information Processing Systems},
 title = {Riemannian Continuous Normalizing Flows},
 year = {2020}
}

@book{lee2012smooth,
  title     = {Introduction to Smooth Manifolds},
  author    = {Lee, John M.},
  edition   = {2},
  series    = {Graduate Texts in Mathematics},
  volume    = {218},
  publisher = {Springer},
  year      = {2012},
}

@article{pelletier2005kernel,
  author  = {Bruno Pelletier},
  title   = {Kernel density estimation on Riemannian manifolds},
  journal = {Statistics \& Probability Letters},
  volume  = {73},
  number  = {3},
  pages   = {297--304},
  year    = {2005}
}

@inproceedings{ozakin2009submanifold,
 author = {Ozakin, Arkadas and Gray, Alexander},
 booktitle = {Advances in Neural Information Processing Systems},
 title = {Submanifold density estimation},
 year = {2009}
}

@article{BaePolonik2026KernelSmoothing,
  title   = {Kernel smoothing on manifolds},
  author  = {Bae, Eunseong and Polonik, Wolfgang},
  journal = {arXiv preprint arXiv:2601.16777},
  year    = {2026},
}

@inproceedings{mescheder2018which,
  title     = {Which Training Methods for GANs do Actually Converge?},
  author    = {Mescheder, Lars and Geiger, Andreas and Nowozin, Sebastian},
  booktitle = {Proceedings of the 35th International Conference on Machine Learning},
  series    = {Proceedings of Machine Learning Research},
  year      = {2018},
}

@article{Cleanthous2020KernelWaveletDensity,
  TITLE = {{Kernel and wavelet density estimators on manifolds and more general metric spaces}},
  AUTHOR = {Cleanthous, Galatia and Georgiadis, Athanasios G and Kerkyacharian, Gerard and Petrushev, Pencho and Picard, Dominique},
  JOURNAL = {{Bernoulli Society for Mathematical Statistics and Probability}},
  VOLUME = {26},
  YEAR = {2020},
}

@inproceedings{Caseiro2012SemiIntrinsicMeanShift,
author = {Caseiro, Rui and Henriques, Joao and Martins, Pedro and Batista, Jorge},
year = {2012},
month = {10},
pages = {342-355},
title = {Semi-intrinsic Mean Shift on Riemannian Manifolds},
volume = {7572},
booktitle = {Computer Vision, ECCV},

}

@article{KimPark2013GeometricStructures,
author = {Kim, Yoon Tae and Park, Hyun Suk},
year = {2013},
month = {02},
pages = {112-126},
title = {Geometric structures arising from kernel density estimation on Riemannian manifolds},
volume = {114},
journal = {Journal of Multivariate Analysis},
doi = {10.1016/j.jmva.2012.07.006}
}

@article{WuWu2022StrongUniformConsistency,
author = {Wu, Hau-Tieng and Wu, Nan},
year = {2021},
title = {Strong uniform consistency with rates for kernel density estimators with general kernels on manifolds},
volume = {11},
journal = {Information and Inference: A Journal of the IMA},
}

@inproceedings{borovitskiy2020matern,
 author = {Borovitskiy, Viacheslav and Terenin, Alexander and Mostowsky, Peter and Deisenroth, Marc},
 booktitle = {Advances in Neural Information Processing Systems},
 title = {Mat\'{e}rn Gaussian Processes on Riemannian Manifolds},
 volume = {33},
 year = {2020}
}

@article{jayasumana2015kernel,
author = {Jayasumana, Sadeep and Hartley, Richard and Salzmann, Mathieu and li, Hongdong and Harandi, Mehrtash},
year = {2014},
title = {Kernel Methods on Riemannian Manifolds with Gaussian RBF Kernels},
volume = {37},
journal = {IEEE Transactions on Pattern Analysis and Machine Intelligence},
}

@article{steinert2025universal,
  title={Universal kernels via harmonic analysis on Riemannian symmetric spaces},
  author={Franziskus Steinert and Salem Said and Cyrus Mostajeran},
  journal = {arXiv preprint arXiv:2506.19245},
  year={2025},
}

@article{
weber2023score,
title={The Score-Difference Flow for Implicit Generative Modeling},
author={Romann M. Weber},
journal={Transactions on Machine Learning Research},
year={2023},

}

@inproceedings{gorham2015measuring,
  title={Measuring Sample Quality with Stein's Method},
  author={Gorham, Jackson and Mackey, Lester},
  booktitle={Advances in Neural Information Processing Systems},
  year={2015}
}

@inproceedings{fukumizu2009characteristic,
 author = {Fukumizu, Kenji and Gretton, Arthur and Sch\"{o}lkopf, Bernhard and Sriperumbudur, Bharath K.},
 booktitle = {Advances in Neural Information Processing Systems},
 publisher = {Curran Associates, Inc.},
 title = {Characteristic Kernels on Groups and Semigroups},
 volume = {21},
 year = {2008}
}

@InProceedings{avdeyev2023dirichletdiffusionscoremodel,
  title = 	 {{D}irichlet Diffusion Score Model for Biological Sequence Generation},
  author =       {Avdeyev, Pavel and Shi, Chenlai and Tan, Yuhao and Dudnyk, Kseniia and Zhou, Jian},
  booktitle = 	 {Proceedings of the 40th International Conference on Machine Learning},
  year = 	 {2023},

  publisher =    {PMLR},
}

@article{Cao_2026_GradientFlowDrifting,
      title={Gradient Flow Drifting: Generative Modeling via Wasserstein Gradient Flows of KDE-Approximated Divergences}, 
      author={Jiarui Cao and Zixuan Wei and Yuxin Liu},
      year={2026},
    journal = {arXiv preprint arXiv:2603.10592},
}

@article{Franz_2026_DriftingFields,
      title={Drifting Fields are not Conservative}, 
      author={Leonard Franz and Sebastian Hoffmann and Georg Martius},
      year={2026},
    journal = {arXiv preprint arXiv:2604.06333},

}

@article{
boffi2025flowmap,
    title={Flow map matching with stochastic interpolants: A mathematical framework for consistency models},
    author={Nicholas Matthew Boffi and Michael Samuel Albergo and Eric Vanden-Eijnden},
    journal={Transactions on Machine Learning Research},
    year={2025},
    publisher={TMLR},
}

@inproceedings{
boffi2025howto,
title={How to build a consistency model: Learning flow maps via self-distillation},
author={Nicholas Matthew Boffi and Michael Samuel Albergo and Eric Vanden-Eijnden},
booktitle={The Thirty-ninth Annual Conference on Neural Information Processing Systems},
year={2025},
}

@inproceedings{
Geng2025MeanFlows,
title={Mean Flows for One-step Generative Modeling},
author={Zhengyang Geng and Mingyang Deng and Xingjian Bai and J Zico Kolter and Kaiming He},
booktitle={The Thirty-ninth Annual Conference on Neural Information Processing Systems},
year={2026},
}

@article{Geng2025ImprovedMeanFlows,
author = {Geng, Zhengyang and Lu, Yiyang and Wu, Zongze and Shechtman, Eli and Kolter, J. and He, Kaiming},
year = {2025},
title = {Improved Mean Flows: On the Challenges of Fastforward Generative Models},
  journal = {arXiv preprint arXiv:2512.0201},
}

@inproceedings{
Zhou2025TerminalVelocityMatching,
title={Terminal Velocity Matching},
author={Linqi Zhou and Mathias Parger and Ayaan Haque and Jiaming Song},
booktitle={The Fourteenth International Conference on Learning Representations},
year={2026},

}

@inproceedings{
Lipman_2023,
title={Flow Matching for Generative Modeling},
author={Yaron Lipman and Ricky T. Q. Chen and Heli Ben-Hamu and Maximilian Nickel and Matthew Le},
booktitle={The Eleventh International Conference on Learning Representations },
year={2023},
}

@inproceedings{Ho2020DenoisingDiffusion,
 author = {Ho, Jonathan and Jain, Ajay and Abbeel, Pieter},
 booktitle = {Advances in Neural Information Processing Systems},
 pages = {6840--6851},
 title = {Denoising Diffusion Probabilistic Models},
 volume = {33},
 year = {2020}
}

@inproceedings{
Song2021ScoreBased,
title={Score-Based Generative Modeling through Stochastic Differential Equations},
author={Yang Song and Jascha Sohl-Dickstein and Diederik P Kingma and Abhishek Kumar and Stefano Ermon and Ben Poole},
booktitle={International Conference on Learning Representations},
year={2021},
}

@inproceedings{
Salimans2022ProgressiveDistillation,
title={Progressive Distillation for Fast Sampling of Diffusion Models},
author={Tim Salimans and Jonathan Ho},
booktitle={International Conference on Learning Representations},
year={2022},
}

@article{Luo2023DiffInstruct,
author = {Hu, Tianyang and Li, Zhenguo and Luo, Weijian and Sun, Jiacheng and Zhang, Zhihua and Zhang, Shifeng},
year = {2023},
title = {Diff-Instruct: A Universal Approach for Transferring Knowledge From Pre-trained Diffusion Models},
journal = {arXiv preprint arXiv:2305.18455},

}

@inproceedings{Chen_2024_RFM,
 author = {Chen, Ricky T. Q. and Lipman, Yaron},
 booktitle = {International Conference on Learning Representations},
 title = {Flow Matching on General Geometries},
 year = {2024}
}

@inproceedings{
Zaghen2025RiemannianVariationalFlowMatching,
title={Riemannian Variational Flow Matching for Material and Protein Design},
author={Olga Zaghen and Floor Eijkelboom and Alison Pouplin and Cong Liu and Max Welling and Jan-Willem van de Meent and Erik J Bekkers},
booktitle={The Fourteenth International Conference on Learning Representations},
year={2026},
}

@inproceedings{Austin2021StructuredDenoisingDiffusion,
 author = {Austin, Jacob and Johnson, Daniel D. and Ho, Jonathan and Tarlow, Daniel and van den Berg, Rianne},
 booktitle = {Advances in Neural Information Processing Systems},
 title = {Structured Denoising Diffusion Models in Discrete State-Spaces},
 year = {2021}
}

@inproceedings{
Sahoo2024MaskedDiffusionLanguageModels,
title={Simple and Effective Masked Diffusion Language Models},
author={Subham Sekhar Sahoo and Marianne Arriola and Aaron Gokaslan and Edgar Mariano Marroquin and Alexander M Rush and Yair Schiff and Justin T Chiu and Volodymyr Kuleshov},
booktitle={The Thirty-eighth Annual Conference on Neural Information Processing Systems},
year={2024},
}

@inproceedings{Shi2024SimplifiedGeneralizedMaskedDiffusion,
 author = {Shi, Jiaxin and Han, Kehang and Wang, Zhe and Doucet, Arnaud and Titsias, Michalis},
 booktitle = {Advances in Neural Information Processing Systems},
 title = {Simplified and Generalized Masked Diffusion for Discrete Data},
 year = {2024}
}

@inproceedings{Campbell2022ContinuousTimeDiscreteDenoising,
 author = {Campbell, Andrew and Benton, Joe and De Bortoli, Valentin and Rainforth, Thomas and Deligiannidis, George and Doucet, Arnaud},
 booktitle = {Advances in Neural Information Processing Systems},
 title = {A Continuous Time Framework for Discrete Denoising Models},
 year = {2022}
}

@inproceedings{
Sun2023ScoreBasedDiscreteDiffusion,
title={Score-based Continuous-time Discrete Diffusion Models},
author={Haoran Sun and Lijun Yu and Bo Dai and Dale Schuurmans and Hanjun Dai},
booktitle={The Eleventh International Conference on Learning Representations },
year={2023},
}

@article{Dieleman2022ContinuousDiffusionCategorical,
  title={Continuous diffusion for categorical data},
  author={Sander Dieleman and Laurent Sartran and Arman Roshannai and Nikolay Savinov and Yaroslav Ganin and Pierre H. Richemond and A. Doucet and Robin Strudel and Chris Dyer and Conor Durkan and Curtis Hawthorne and R{\'e}mi Leblond and Will Grathwohl and Jonas Adler},
  year={2022},
  journal = {arXiv preprint arXiv:2211.15089}
}

@inproceedings{ Cheng_2025_SFM,
title={Categorical Flow Matching on Statistical Manifolds},
author={Chaoran Cheng and Jiahan Li and Jian Peng and Ge Liu},
booktitle={The Thirty-eighth Annual Conference on Neural Information Processing Systems},
year={2024}
}

@article{Subbarao2009NonlinearMeanShift,
  author  = {Subbarao, Raghav and Meer, Peter},
  title   = {Nonlinear Mean Shift over Riemannian Manifolds},
  journal = {International Journal of Computer Vision},
  year    = {2009},
}

@book{lee2018riemannian,
  author    = {Lee, John M.},
  title     = {Introduction to Riemannian Manifolds},
  series    = {Graduate Texts in Mathematics},
  volume    = {176},
  edition   = {2},
  publisher = {Springer},
  year      = {2018}
}

@inproceedings{
park2026pairflow,
title={PairFlow: Closed-Form Source-Target Coupling for Few-Step Generation in Discrete Flow Models},
author={Mingue Park and Jisung Hwang and Seungwoo Yoo and Kyeongmin Yeo and Minhyuk Sung},
booktitle={The Fourteenth International Conference on Learning Representations},
year={2026},
}

@article{Wu_2018_MoleculeNet,
  author    = {Wu, Zhenqin and Ramsundar, Bharath and Feinberg, Evan N. and Gomes, Joseph and Geniesse, Caleb and Pappu, Aneesh S. and Leswing, Karl and Pande, Vijay S.},
  title     = {{MoleculeNet}: A Benchmark for Molecular Machine Learning},
  journal   = {Chemical Science},
  volume    = {9},
  number    = {2},
  pages     = {513--530},
  year      = {2018},
  publisher = {Royal Society of Chemistry},
  doi       = {10.1039/C7SC02664A}
}

\newpage
\appendix
\newpage
\section{Notation}

In this section, we report the notations that are used in the paper and the rest of the appendix. For a more in depth introduction to Riemannian geometry refer to \citep{lee2018riemannian} or the Appendix section in \citep{Zaghen2025RiemannianVariationalFlowMatching}.

\begin{tabularx}{\textwidth}{llX}
\toprule
\textbf{Symbol} & \textbf{Name} & \textbf{Description} \\
\midrule
$p$ & data distribution & Target distribution. \\
$q_\theta$ & model distribution & Pushforward distribution induced by the generator $f_\theta$. \\
$f_\theta$ & generator & One-step map from latent noise to samples. \\
$\widehat{p}_k$ & smoothed data density & Kernel-smoothed version of $p$. \\
$\widehat{q}_{\theta,k}$ & smoothed model density & Kernel-smoothed version of $q_\theta$. \\
$\mathrm{sg}(\cdot)$ & stop-gradient & Operator that blocks gradients through its argument for backpropagation. \\
$V_{p,q_\theta}$ & drifting field & Original drift field by \citep{Deng_2026_Drift}. \\
$V^\nabla_{p,q_\theta}$ & kernel-gradient drift & Drift field induced by kernel gradients. \\
$D_{\mathrm{KL}}$ & KL divergence & Kullback--Leibler divergence. \\

$V_p^{+}$ & Original attraction drift & Original attraction drift by \citep{Deng_2026_Drift} that attracts models samples to data distribution $p$.\\ 
$V_{q_\theta}^{-}$ & Original repulsion drift & Original repulsion drift by \citep{Deng_2026_Drift} that  repulse models samples to  from each other.\\ 

$G_p^{+}$ & Gradient attraction drift & Attraction field of model samples towards data distribution $p$.\\ 
$G_{q_\theta}^{-}$ & Gradient repulsion drift & Repulsion field of model samples  from each other.\\ 

$\eta$ & step size & Scale of the drift update. \\
$\tau$ & temperature & Kernel bandwidth. \\
$\nu$ & smoothness & Smoothness parameter of the Matérn kernel. \\
\midrule
$\mathcal{M}$ & manifold & Smooth Riemannian manifold. \\
$g$ & metric & Riemannian metric on $\mathcal{M}$; $\langle \cdot, \cdot \rangle = g_p(\cdot, \cdot)$ and $|\cdot| = \sqrt{g_p(\cdot, \cdot)}$. \\
$T_x\mathcal{M}$ & tangent space & Tangent space at $x\in\mathcal{M}$. \\
$\nabla_x^{\mathcal{M}}$ & Riemannian gradient & Intrinsic gradient with respect to $x$. \\
$\Exp_x$ & exponential map & Maps tangent vectors in $T_x\mathcal{M}$ back to $\mathcal{M}$. Within the injectivity radius, this map is a diffeomorphism. \\
$\Log_x$ & logarithm map & Inverse of $\Exp_x$ locally (within the injectivity radius). Maps points in $\mathcal{M}$ to tangent vectors in $T_x\mathcal{M}$). \\
$d_g(x,y)$ & geodesic distance & Distance induced by the metric $g$. \\
inj($x$) & injectivity radius & The injectivity radius at $x \in \mathcal{M}$ is the supremum of all values
$r > 0$ such that the exponential map from the ball
$B_r(0) \subset T_x\mathcal{M}$ to the manifold $\mathcal{M}$ is injective. \\
\midrule
$\mathbb{S}^2$ & sphere & Unit 2-sphere used for spherical data. \\
$\mathbb{H}^2$ & hyperboloid & Hyperbolic 2D manifold used in synthetic experiments. \\
$\Delta^d$ & simplex & Probability simplex for categorical data. \\
$\mathring{\Delta}^d$ & simplex interior & Relative interior of the probability simplex. \\
$\mathbb{S}^d_+$ & positive sphere orthant & Fisher--Rao embedding space of the simplex. \\
\bottomrule
\end{tabularx}

\newpage
\section{Discrete data generation on the probability simplex} \label{app: discrete generation}
In this section we provide further details on the generation of discrete data in the probability simplex with the Fisher-Rao metric. This construction is based on the work of \citep{Davis_2024_Fisher}. 
Consider the \(d\)-dimensional probability simplex
\[
\Delta^d
=
\left\{
x\in\mathbb{R}^{d+1}
\,\middle|\,
\mathbf{1}^{\top}x=1,\; x\geq 0
\right\},
\]
and its relative interior
\[
\mathring{\Delta}^d
=
\left\{
x\in\Delta^d
\,\middle|\,
x_i>0 \;\; \forall i
\right\}.
\]
A categorical distribution over \(K=d+1\) categories can be represented as a point in \(\Delta^d\), where the vertices of the simplex correspond to hard categories. We can endow this manifold with a Riemannian metric: The Fisher-Rao metric.  Under the convention
\[
g_x(u,v)
=
\sum_{i=0}^{d}
\frac{u_i v_i}{x_i},
\]
the square-root map
\[
\phi(x)
=
2\left(
\sqrt{x_0},\ldots,\sqrt{x_d}
\right)
\]
is an isometry from \(\mathring{\Delta}^d\) to the positive orthant of the sphere of radius \(2\). Equivalently, under a rescaled Fisher--Rao metric, the map \(x\mapsto \sqrt{x}\) isometrically identifies the simplex with the positive orthant of the unit sphere. For a sequence of length \(L\) with \(K\) categories per position, the continuous state space is the product manifold
\[
\left(\mathbb{S}^{K-1}_{+}\right)^L.
\]
The generator outputs points on this product manifold, kernel-gradient drifting is performed using the Riemannian formulation above, and final discrete samples are obtained by projecting each simplex point to a category, for example by taking the nearest vertex or the largest coordinate. Thus the same construction that makes drifting intrinsic on manifolds also provides a principled way to apply one-step drifting to discrete data.

\section{What goes wrong with the original formulation? } \label{app: original formulation}
The argument is very similar to the one presented in \citep{Lai_2026_Unified}. Let's take the original drift formulation (\ref{eq:drifting_field}), generalized to the manifold setting (this just involves replacing the direction $y-x$, by $\Log_xy$). So the drift then looks like 
\[V_{p,q_\theta}(x) =
\frac{
\mathbb{E}_{y \sim p}
\!\left[
k(x,y)\, \Log_{x}(y)
\right]
}{
\mathbb{E}_{y \sim p}
\!\left[
k(x,y)
\right]
}
-
\frac{
\mathbb{E}_{x' \sim q_\theta}
\!\left[
k(x,x')\, \Log_{x}(x')
\right]
}{
\mathbb{E}_{x' \sim q_\theta}
\!\left[
k(x,x')
\right]
},
\qquad
V_{p,q_\theta}(x) \in T_{x}\M, 
\] where $k(x,y)$ is a exponential geodesic radial kernel
\[
k(x, y)
=
\exp\!\left(
\phi\!\left(
\frac{d_g^2(x, y)}{\tau^2}
\right)
\right)
\]for a smooth function $\phi: \mathbb{R} \to \mathbb{R}$. Slightly abusing notation, we write $k\!\left(
\frac{d^2_g(x,y)}{\tau^2}
\right)$ for the geodesic radial kernel $k(x,y)$. The mean-shift direction is then 

\[
\nabla \log \hat p(x)
=
\frac{-2}{\tau^2}
\frac{
\mathbb{E}_{y\sim p}
\!\left[
k\!\left(
\frac{d^2_g(x,y)}{\tau^2}
\right)
\phi'\!\left(
\frac{d^2_g(x,y)}{\tau^2}
\right)
\Log_{x}(y)
\right]
}{
\mathbb{E}_{y\sim p}
\!\left[
k\!\left(
\frac{d^2_g(x,y)}{\tau^2}
\right)
\right]
}
\]

where we have used the identity
\[
\nabla_{x} d^2_g(x,y) = -2\,\Log_{x}(y).
\] 
Note that this construction also follows from the Kernel density estimation (KDE) on Riemannian manifolds \citep{Subbarao2009NonlinearMeanShift}.  Now, we apply the formula 
\[ \mathbb{E}[f(x)g(x)]=\text{Cov}(f(x),g(x))+\mathbb{E}[f(x)]\mathbb{E}[g(x)]\]

to get

\[
\begin{aligned}
\nabla \log \hat p(x)
&=
\frac{1}{\tau^2}
\frac{
\mathbb{E}_{y\sim p}
\!\left[
k\!\left(
\frac{d^2_g(x,y)}{\tau^2}
\right)
\Log_{x}(y)
\right]
}{
\mathbb{E}_{y\sim p}
\!\left[
k\!\left(
\frac{d^2_g(x,y)}{\tau^2}
\right)
\right]
} 
\underbrace{
\mathbb{E}_{y\sim p}
\!\left[
\phi'\!\left(
\frac{d^2_g(x,y)}{\tau^2}
\right)
\right]}_{A_p(x)}
\\
&\quad
+ \frac{1}{\tau^2}\underbrace{\frac{\operatorname{Cov}\!\left(
k\!\left(
\frac{d^2_g(x,y)}{\tau^2}
\right)
\Log_{x}(y),
\phi'\!\left(
\frac{d^2_g(x,y)}{\tau^2}
\right)
\right)}{\mathbb{E}_{y\sim p}
\!\left[
k\!\left(
\frac{d^2_g(x,y)}{\tau^2}
\right)
\right]}.}_{\delta_p(x)}
\end{aligned}
\]

The same decomposition applies to $q_\theta$, and we get 
\[ \nabla\log \frac{\hat p (x)}{\hat q_\theta (x)} = \tau^2\big(V^+_{p}(x) A_p(x)- V^-_{q_\theta}(x) A_{q_\theta}(x) \big) + \delta_p(x) - \delta_{q_\theta}(x).\]

Thus, in the Gaussian case, the drifting field is exactly the mean-shift direction, but this is not necessarily true for non-Gaussian kernels due to the extra factors $A_p, A_{q_\theta}$ and covariance $\delta_p, \delta_{q_\theta}$. The curvature of the manifold enters in through the exponential map. The larger the curvature, the larger is the Riemannian volume distortion, amplifying the mismatch between the mean-shift direction and the true smoothed score direction.

\section{Theoretical guarantees} \label{app: theoretical guarantees}

\begin{lemma}  Consider the gradient-drifting field $V^\nabla_{p,q_\theta}$ as defined in (\ref{eq:gradient_drifting}). Then, 
\[
p = q_\theta \;\;\Rightarrow\;\; V^\nabla_{p,q_\theta}(x) = 0 \quad \text{for all } x.
\]
\end{lemma}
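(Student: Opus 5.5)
This is the trivial direction of identifiability, and the plan is to read it off directly from the definition in \eqref{eq:gradient_drifting}, without invoking Proposition~\ref{prop:score_ratio_matching}. If $p=q_\theta$ as distributions, then for every fixed $x$ the expectations $\mathbb{E}_{y\sim p}[\nabla_x k(x,y)]$ and $\mathbb{E}_{x'\sim q_\theta}[\nabla_x k(x,x')]$ coincide, since they integrate the same function $y\mapsto \nabla_x k(x,y)$ against the same measure. The same holds for the denominators $\mathbb{E}_{y\sim p}[k(x,y)]$ and $\mathbb{E}_{x'\sim q_\theta}[k(x,x')]$. Hence $G_p^+(x)=G_{q_\theta}^-(x)$, and therefore $V^\nabla_{p,q_\theta}(x)=G_p^+(x)-G_{q_\theta}^-(x)=0$ for every $x$.

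The only point needing care is that both quotients are well defined, so that the subtraction is a genuine equality of finite vectors. Positivity of $k$ gives $\mathbb{E}_{y\sim p}[k(x,y)]>0$, so the denominator never vanishes. Integrability of $\nabla_x k(x,\cdot)$ under $p$ is part of the standing assumption used to define $G_p^+$; I will state it explicitly as a hypothesis. Under these conditions the equality holds pointwise.

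As a remark, I would add that the same conclusion follows from Proposition~\ref{prop:score_ratio_matching}. There, $p=q_\theta$ gives $\widehat{p}_k=\widehat{q}_{\theta,k}$, so the log-ratio $\log(\widehat{p}_k/\widehat{q}_{\theta,k})$ is identically zero and its gradient vanishes. This route needs the interchange of differentiation and integration, whereas the direct argument does not, so I will present the direct one.

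The argument carries over verbatim to the Riemannian field \eqref{eq:riemannian_gradient_drift}, replacing $\nabla_x$ by $\nabla_x^{\M}$. I do not expect a real obstacle here; the step most worth stating explicitly is the well-definedness of the quotients.
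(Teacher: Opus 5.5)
Your proof is correct and is essentially the paper's argument. The paper phrases it through antisymmetry: $V^\nabla_{p,q_\theta}=-V^\nabla_{q_\theta,p}$, so $p=q_\theta$ forces $V^\nabla_{p,q_\theta}=-V^\nabla_{p,q_\theta}=0$. That is just an indirect form of your observation that $G_p^+$ and $G_{q_\theta}^-$ are the same functional evaluated at the same distribution, and your explicit check that the quotients are well defined (positive denominators, integrable $\nabla_x k$) makes precise what the paper only indicates with ``since $k$ is a positive kernel.''
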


\begin{proof}
The proof follows from \cite{Deng_2026_Drift}. We add it for completeness. Since $k$ is a positive kernel, it is easy to see that $V^\nabla_{p,q_\theta}(x)$ is an antisymmetric drifting field, i.e. $V^\nabla_{p,q_\theta}(x) = -V^\nabla_{q_\theta,p}(x)$ $\text{for all } x$. Then, 
\[ V^\nabla_{p,q_\theta}(x) =  V^\nabla_{q_\theta,p}(x) = - V^\nabla_{p,q_\theta}(x).\]   
\end{proof}

Here we present the formal version of Proposition \ref{prop:score_ratio_matching}.

\textbf{Proposition 3.1} \textbf{(formal)}\label{prop: gaussian_kernel smoothed}
Let $p:\M\to[0,\infty)$ and $q_\theta:\M\to[0,\infty)$ be probability densities with respect to the Riemannian volume measure $d\mathrm{vol}_g$, and consider the normalized kernel  $
k:\M\times \M \to (0,\infty).$ Assume that $\mathcal{M}$ is compact and that given an $x\in \M$ there exists an open neighborhood $U$ of $x$ such that:

\begin{enumerate}
    \item The map $ x \mapsto k(x,y) $
    is $C^1$ on $U$ for almost every $y \in \mathcal{M}.$

    \item For every $x\in U$, the function $
    y \mapsto k(x,y)\,p(y)$ and $
    y \mapsto k(x,y)\,q_\theta(y)$
    are integrable on $\M$.

    \item There exists two integrable functions $
    g \in L^1\!\bigl(p\,d\mathrm{vol}_g\bigr)$ and $
    h \in L^1\!\bigl(q_\theta\,d\mathrm{vol}_g\bigr)$
    such that for all $x\in U$ and for almost every $y$,
    \[
    \|\nabla_{x} k(x,y)\|_g \le g(y)  \hspace{2cm  }\, \, \,  \|\nabla_{x} k(x,y)\|_g \le h(y). 
    \]
\end{enumerate}

Then, if

\[ \hat p(x)=\int_M k(x,y)\,p(y)\,d\mathrm{vol}_g(y),
\, \, \, \, \hat q_\theta(x) = \int_M k(x,y)\,q_\theta(y)\,d\mathrm{vol}_g(y)\]
are the smoothed distributions, the drift operator can be expressed as 
\[V_{p,q_\theta} (x) = \nabla_{x}\log \frac{\hat p (x)}{\hat q_\theta (x)}.\]

\begin{proof} Let
\[
\hat p(x)=\int_M k(x,y)\,p(y)\,d\mathrm{vol}_g(y).
\]

Then, \[
\nabla \log \hat p(x)
= \frac{\displaystyle \nabla_{x} \int_{\M}  k(x,y)\,p(y)\,d\mathrm{vol}_g(y)}
{\displaystyle \int_{\M} k(x,y)\,p(y)\,d\mathrm{vol}_g(y)} =
\frac{\displaystyle \int_{\M} \nabla_{x} k(x,y)\,p(y)\,d\mathrm{vol}_g(y)}
{\displaystyle \int_{\M} k(x,y)\,p(y)\,d\mathrm{vol}_g(y)} 
=
\frac{
\mathbb{E}_{y\sim p}
\!\left[
\nabla_{x}
k\!\left(x,y
\right)
\right]
}{
\mathbb{E}_{y\sim p}
\!\left[
k\!\left(x,y
\right)
\right]
}
\]
Where in the second step we have used the Dominated Convergence Theorem to differentiate under the integral sign. Note that in manifolds, we should use a partition of unity to express the integral as a sum of integrals over $\mathbb{R}^n$, and then differentiate under the
integral signs there (see Proposition 16.33 of \cite{lee2012smooth} for details).  The same procedure is repeated for $q_\theta$ to obtain

\[
\nabla \log \hat q_\theta(x) =
\frac{
\mathbb{E}_{y\sim q_\theta}
\!\left[
\nabla_{x}
k\!\left(x,y
\right)
\right]
}{
\mathbb{E}_{y\sim q_\theta}
\!\left[
k\!\left(x,y
\right)
\right]
}.
\]

Finally, we get

\[V_{ p, q_\theta}(x) =  \nabla\log \frac{\hat p (x)}{\hat q_\theta (x)}.\]
\end{proof}

Note that exponential geodesic kernels satisfy Assumption 2 because they are bounded. However, these kernels are not necessarily smooth outside the cut locus as the distance function is not bijective there. 

Below is the proof of Proposition \ref{prop:score_ratio_matching}.

\textbf{Proposition 3.3 }
\label{prop:identifiability2}
Assume that the kernel \(k\) is characteristic and satisfies the assumptions from Proposition \ref{prop:score_ratio_matching}. If
\[
V^\nabla_{p,q_\theta}(x)=0
\quad \text{for all } x,
\]
then \(p=q_\theta\).

\begin{proof}
By Proposition~\ref{prop:score_ratio_matching}, \(V^\nabla_{p,q}=0\) implies
\[
\nabla_x \log \widehat{p}_k(x)
=
\nabla_x \log \widehat{q}_k(x).
\]
As $k$ is a normalized kernel, this implies
\[
\widehat{p}_k(x)
=
\widehat{q}_k(x).
\]
Since \(k\) is characteristic, this implies \(p=q\).
\end{proof}

Finally we present the proof of Proposition \ref{prop:smoothed_kl_descent}. Note that this proof only holds when the support of the distribution lies in the Euclidean space, as we are assuming the iterative updates are additive. For future work, we would like to extend this proof to manifolds. 

\textbf{Proposition 3.4}  Consider the drift update in the \textit{smoothed} variable \[z_{i+1}=z_i + \eta V_{p_\tau,q_{\theta,\tau}}(z_i),\] where  $z\sim \hat q_{\theta,k}.$  Then, the gradient drift direction gives the steepest infinitesimal decrease of the KL-divergence between the smoothed distributions $\hat p_k (z)$ and $\hat q_{\theta,k} (z)$. This corresponds to the Wasserstein gradient flow of the functional $D_{\mathrm{KL}}(\hat q_{\theta,k} \,\|\, \hat p_k)$.

\begin{proof}
The proof follows from \cite{weber2023score}. Consider the smoothed densities \(p_\tau=p * k_\tau\) and
\(q_{\theta,\tau}=q_\theta * k_\tau\), where \(k_\tau\) is a smoothing kernel.
Equivalently, if \(\xi\sim \mathcal{N}(0,I)\), \(x=f_i(\xi)\sim q_\theta\),
and \(u\sim k_\tau\) is independent noise with density \(k_\tau\), then
\[
z=x+u \sim q_{\theta,\tau}.
\]

Consider the drift update in the smoothed variable $z_{i+1}=z_i +V_{p_\tau,q_{\theta,\tau}}(z_i),$ 
then the KL-divergence between \(q_{\theta,\tau}\) and \(p_\tau\) is
\[
D_{\mathrm{KL}}(q_{\theta,\tau} \,\|\, p_\tau)
= \mathbb{E}_{z \sim q_{\theta,\tau}}
\left[
\log q_{\theta,\tau}(z) - \log p_\tau(z)
\right],
\]

and varies according to its functional derivative,
\[
\nabla_\varepsilon D_{\mathrm{KL}}(q_{\theta,\tau} \,\|\, p_\tau)\big|_{\varepsilon=0}
= - \mathbb{E}_{z \sim q_{\theta,\tau}}
\left[
\operatorname{Tr}(A_{p_\tau} V_{p_\tau,q_{\theta,\tau}}(z))
\right],
\]
where
\[
A_{p_\tau} f(z)
=
\nabla_z \log p_\tau(z)\,
V_{p_\tau,q_{\theta,\tau}}(z)^\top
+
\nabla_z V_{p_\tau,q_{\theta,\tau}}(z)
\]
is the Stein operator \citep{gorham2015measuring}. By applying Stein’s identity, we obtain
\[
\begin{aligned}
\mathbb{E}_{z \sim q_{\theta,\tau}}
\!\left[
\operatorname{Tr}(A_{p_\tau} V_{p_\tau,q_{\theta,\tau}}(z))
\right]
&=
\mathbb{E}_{z \sim q_{\theta,\tau}}
\!\left[
\nabla_z \log p_\tau(z)^\top V_{p_\tau,q_{\theta,\tau}}(z)
\right] \\
&\quad -
\mathbb{E}_{z \sim q_{\theta,\tau}}
\!\left[
\nabla_z \log q_{\theta,\tau}(z)^\top V_{p_\tau,q_{\theta,\tau}}(z)
\right] \\
&=
\mathbb{E}_{z \sim q_{\theta,\tau}}
\!\left[
\left(
\nabla_z \log p_\tau(z)
-
\nabla_z \log q_{\theta,\tau}(z)
\right)^\top
V_{p_\tau,q_{\theta,\tau}}(z)
\right],
\end{aligned}
\]
which is the inner product of the score difference and the flow vector
\(V_{p_\tau,q_{\theta,\tau}}(z)\). Maximizing the reduction in the KL divergence
corresponds to maximizing this inner product. Since the inner product of two
vectors is maximized when they are parallel, choosing
\(V_{p_\tau,q_{\theta,\tau}}(z)\) to output a vector parallel to the score
difference will decrease the KL divergence as fast as possible.
\end{proof}

Note that although $D_{\mathrm{KL}}(q_{\theta,\tau} \,\|\, p_\tau) \leq D_{\mathrm{KL}}(q_{\theta} \,\|\, p)$, we have that $D_{\mathrm{KL}}(q_{\theta,\tau} \,\|\, p_\tau)=0$ if and only if $p=q_\theta$ (as long as the kernel is characteristic). In addition, minimizing the KL divergence of the distributions directly is not usually feasible since it may diverge to infinity of $p$ and $q_\theta$ can have unequal support, so looking at the smoothed distributions is the correct proxy.

\newpage

\section{Additional experimental results}
\label{app:additional-results}
\subsection{Synthetic experiments}
We provide qualitative visualizations of the synthetic experiments to complement the quantitative results in Table~\ref{tab:toy-table}. Figure~\ref{fig:toy-grid} compares the final generated distributions for the four drifting variants: Euclidean base, Euclidean gradient, manifold base, and manifold gradient. The visual results support the trends observed quantitatively. These are, the gradient formulation generally leads to more accurate samples, and incorporating the manifold structure can provide additional improvements in some settings. Figures~\ref{fig:toy-evo-chess-sphere}, \ref{fig:toy-evo-chess-hyperboloid}, \ref{fig:toy-evo-roll-sphere}, and \ref{fig:toy-evo-roll-hyperboloid} provide a more detailed view by showing the evolution of the generated distributions during training at selected checkpoints.

\label{app:additional-results-toy}
\begin{figure}[H]
    \centering
    \includegraphics[width=1\linewidth]{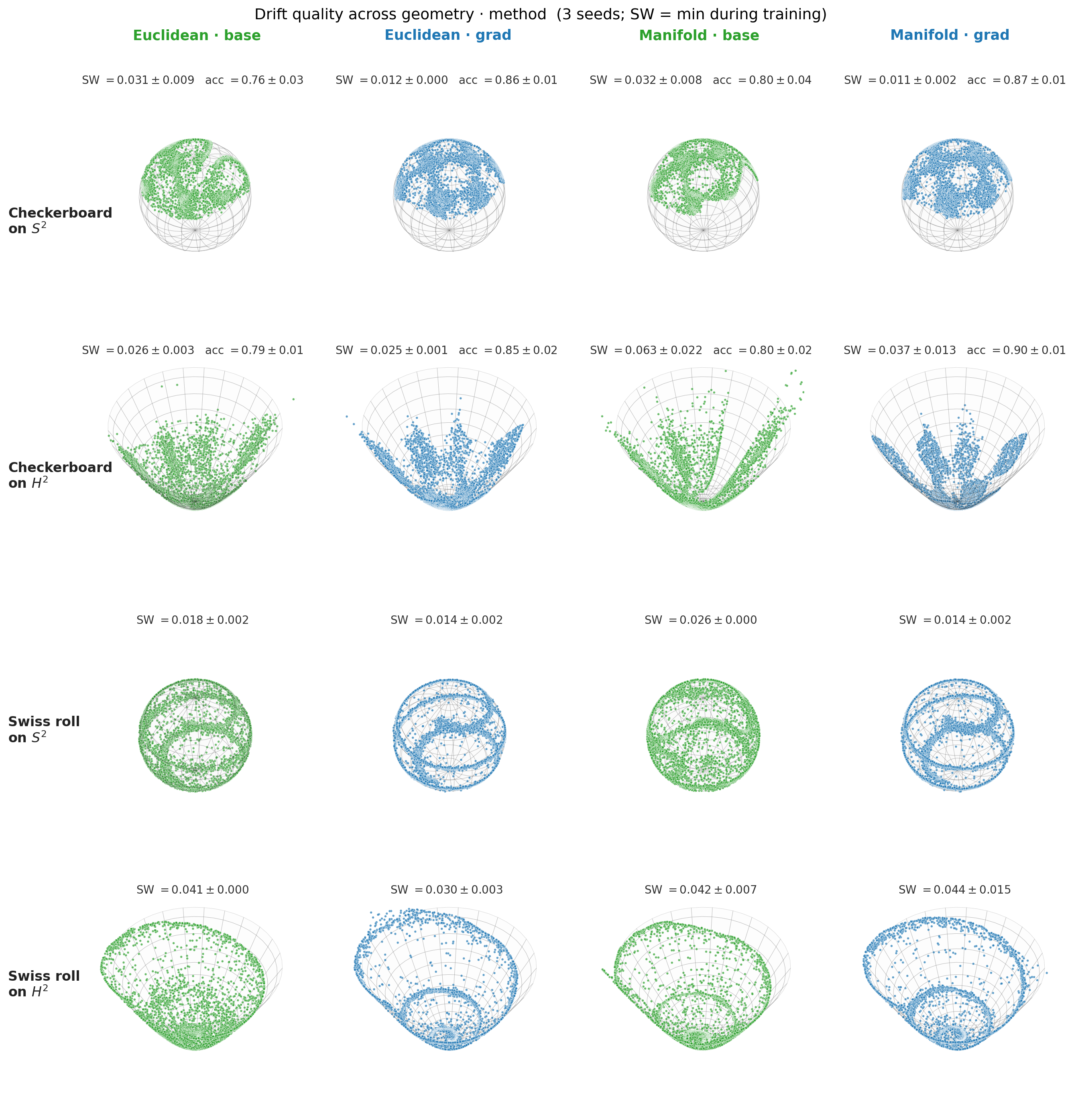}
    \caption{Final generated distributions for the checkerboard and swissroll targets on the sphere $\mathbb{S}^2$ and hyperboloid $\mathbb{H}^2$. We compare the four drifting variants considered in the toy experiments.}
    \label{fig:toy-grid}
\end{figure}

\begin{figure}[H]
    \centering
    \includegraphics[width=1.0\linewidth]{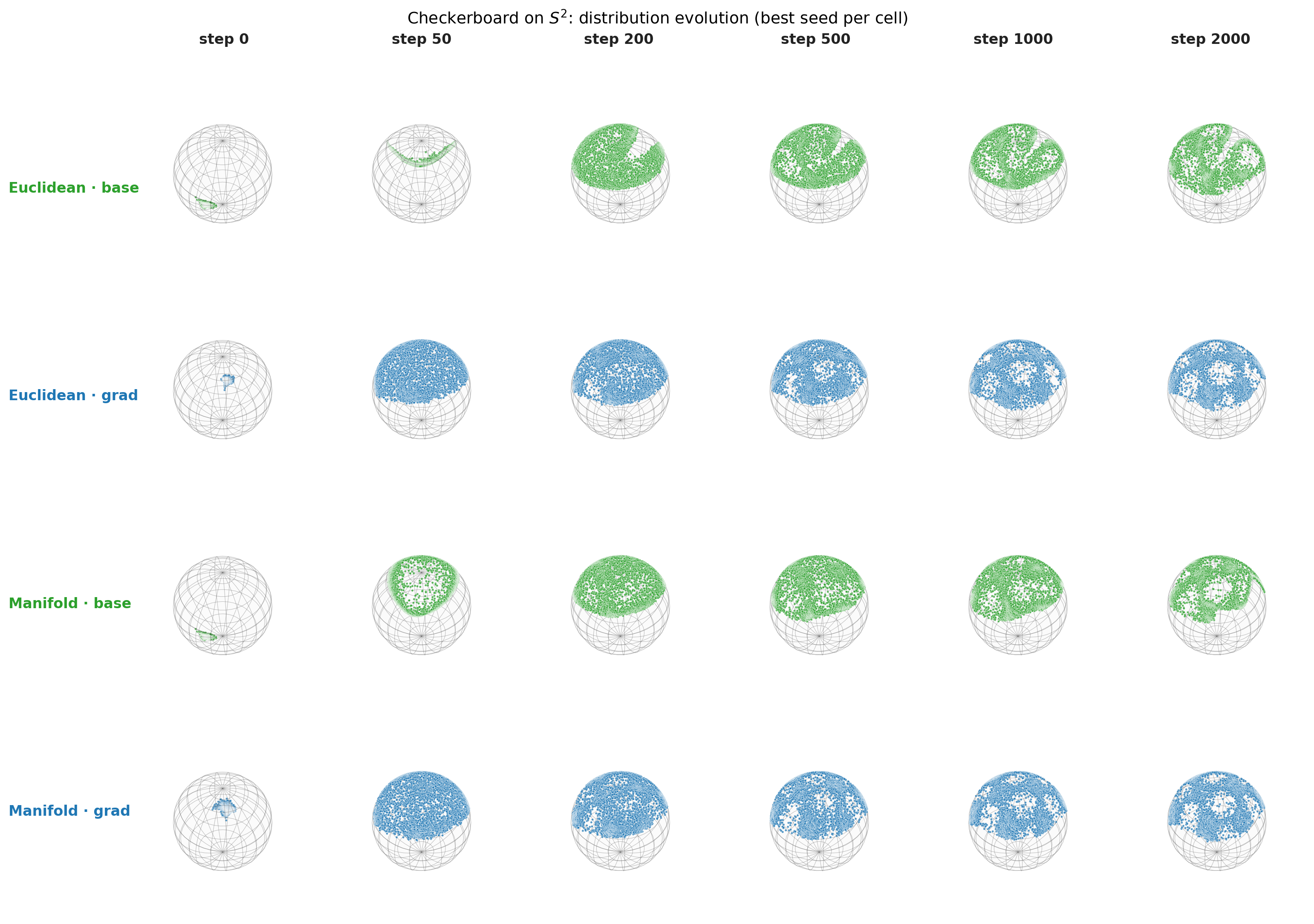}
    \caption{Evolution of the generated distribution during training for the checkerboard target on the spherical manifold $\mathbb{S}^2$.}
    \label{fig:toy-evo-chess-sphere}
\end{figure}

\begin{figure}[H]
    \centering
    \includegraphics[width=0.99\linewidth]{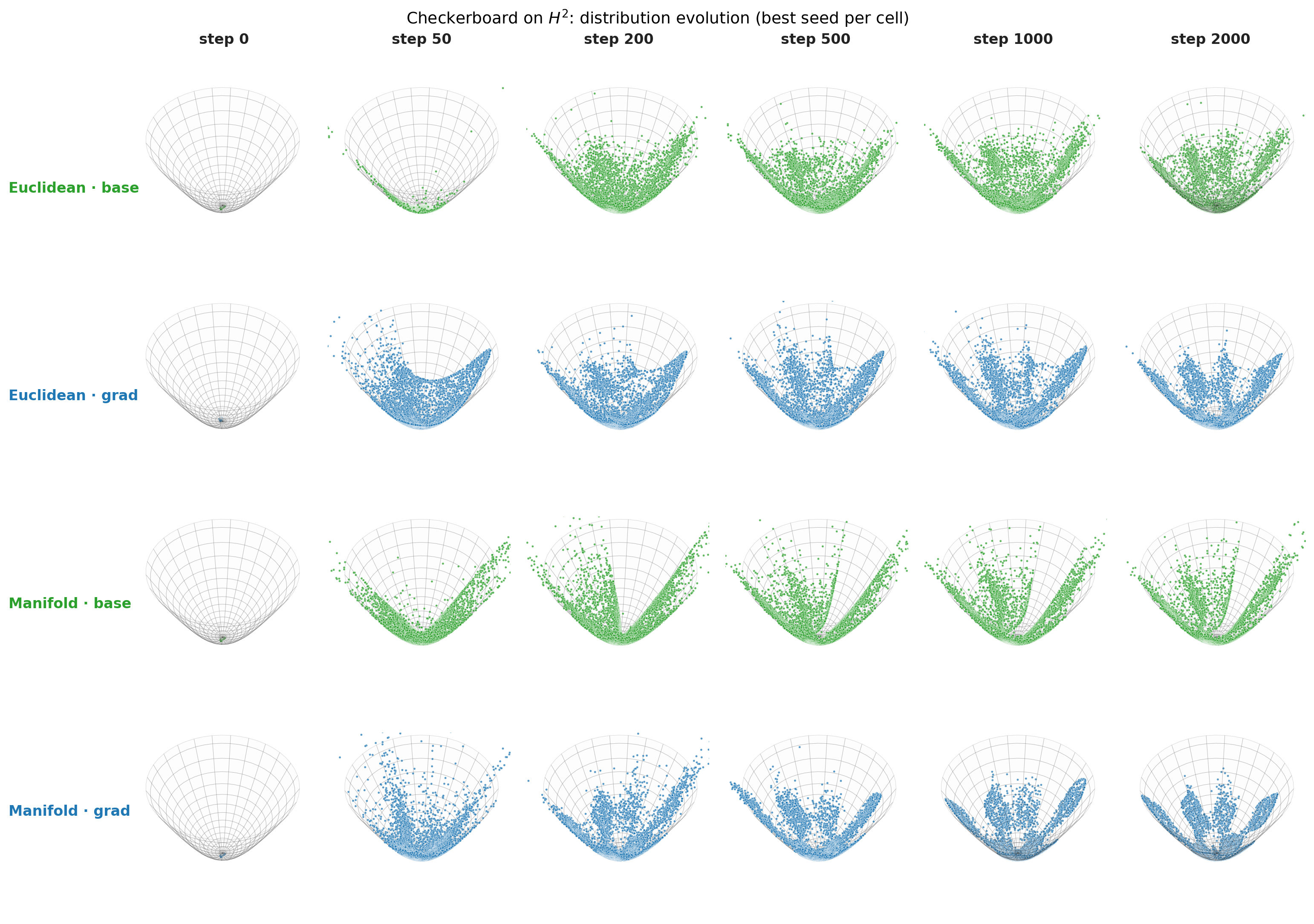}
    \caption{Evolution of the generated distribution during training for the checkerboard target on the hyperbolic manifold $\mathbb{H}^2$.}
    \label{fig:toy-evo-chess-hyperboloid}
\end{figure}
\newpage
\begin{figure}[H]
    \centering
    \includegraphics[width=1.00\linewidth]{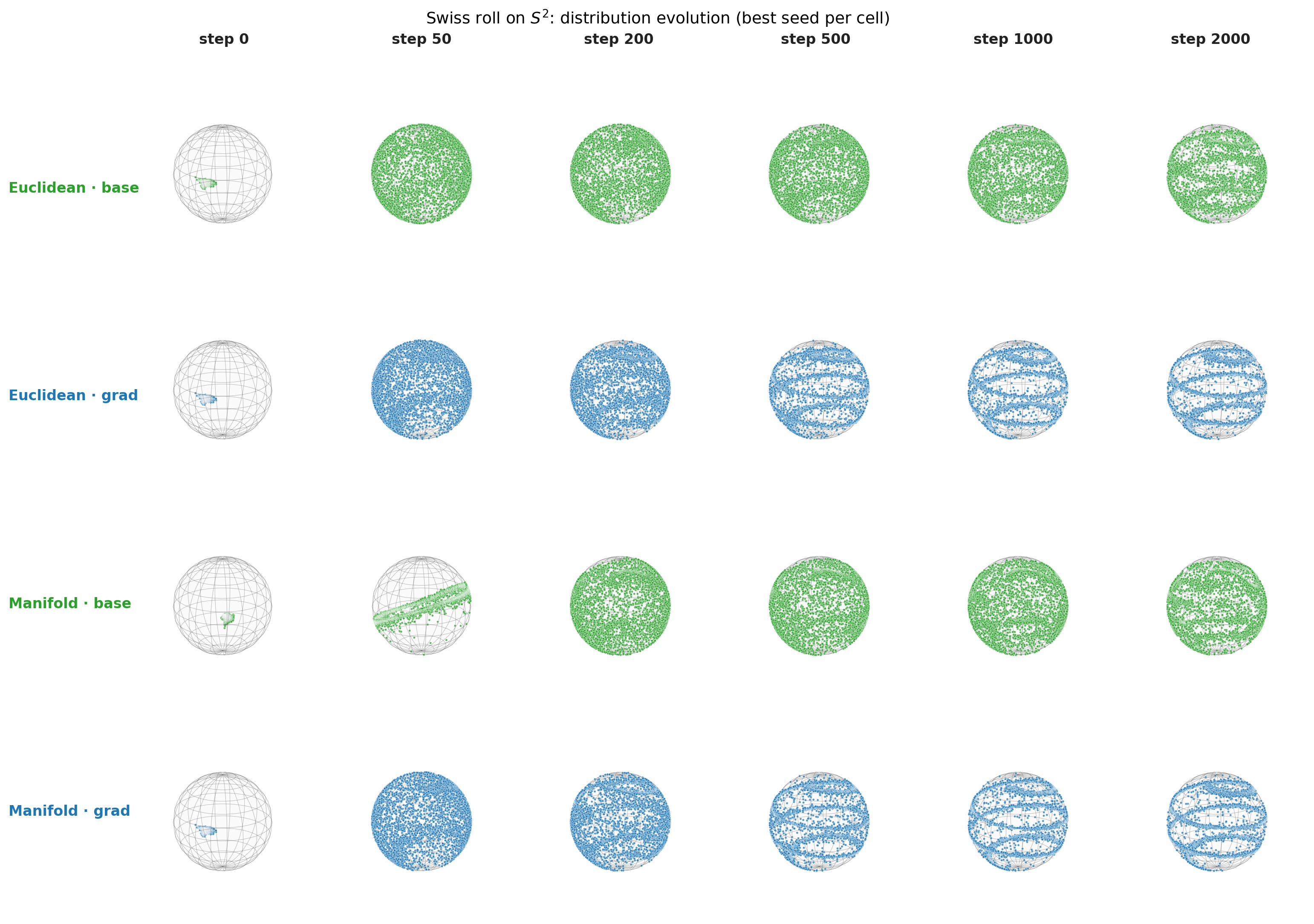}
    \caption{Evolution of the generated distribution during training for the swissroll target on the spherical manifold $\mathbb{S}^2$.}
    \label{fig:toy-evo-roll-sphere}
\end{figure}

\begin{figure}[H]
    \centering
    \includegraphics[width=1.00\linewidth]{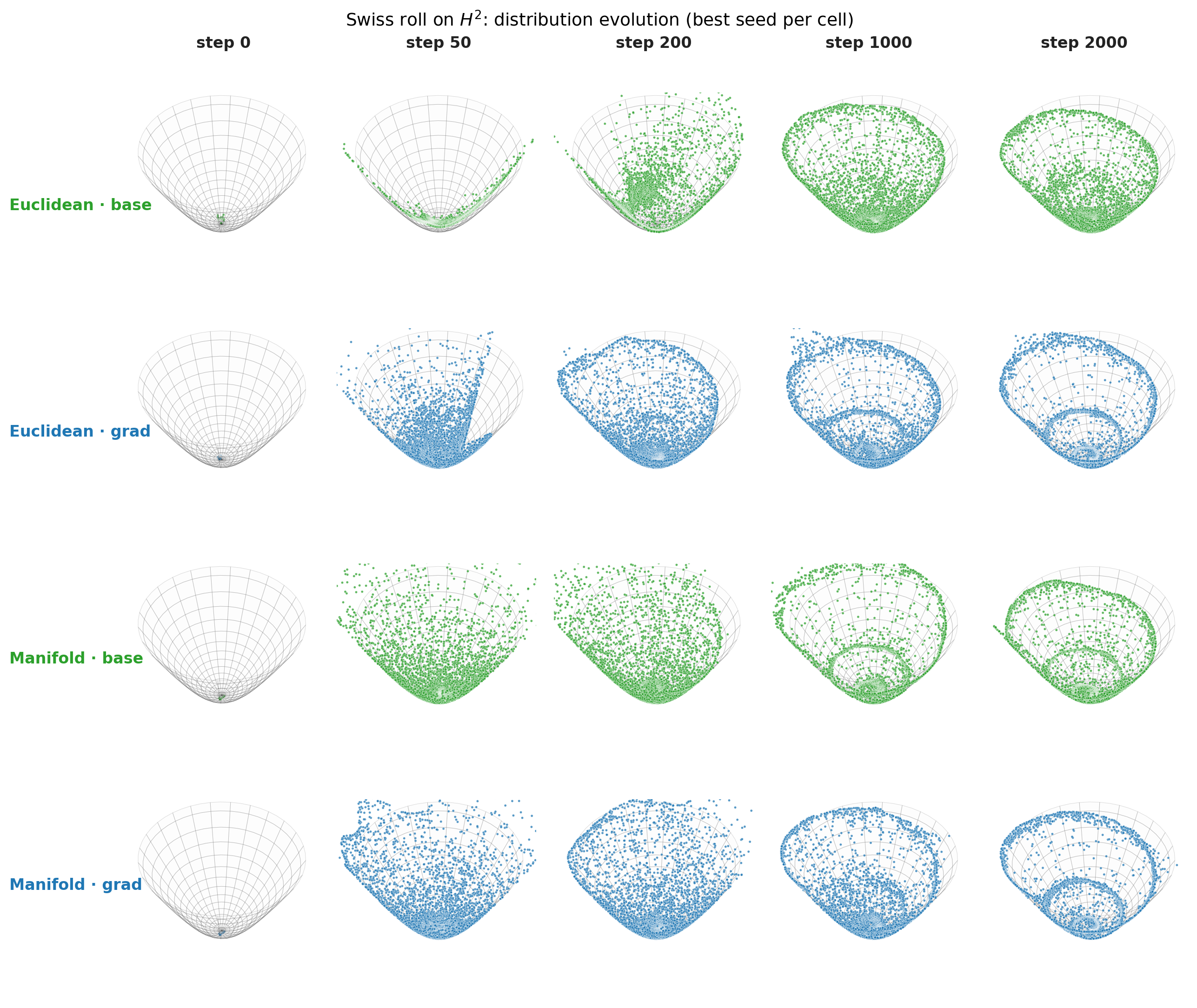}
    \caption{Evolution of the generated distribution during training for the swissroll target on the hyperbolic manifold $\mathbb{H}^2$.}
    \label{fig:toy-evo-roll-hyperboloid}
\end{figure}
\begin{figure}[h!]
\centering
\includegraphics[width=0.9\linewidth]{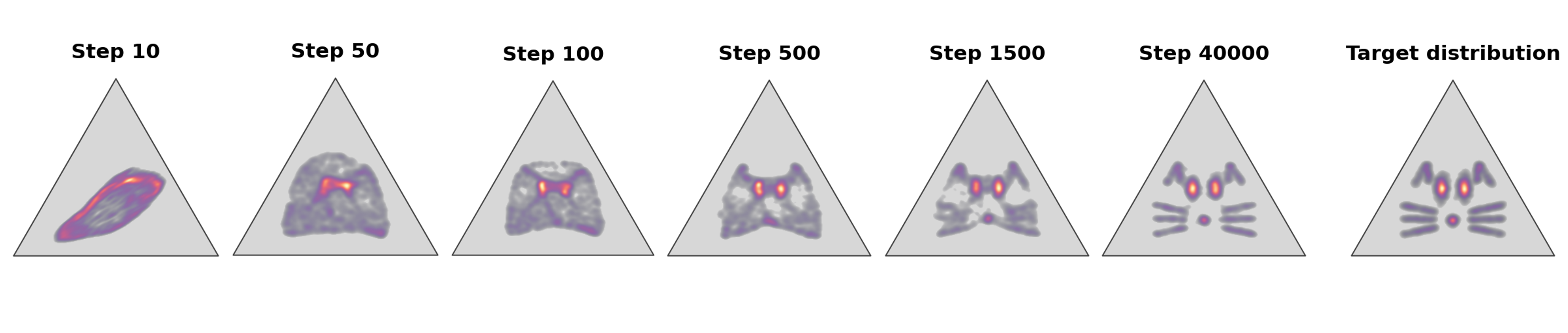}
\caption{Evolution of the 'cat' distribution on the probabiblity simplex during training. }
\label{fig:simplex_spherical_cat}
\end{figure}
\vspace{10pt}

\subsection{Geospatial data on the sphere}
\label{app:additional-results-earth}

In addition to the results reported in Section~\ref{sec:experiments}, we present here a more detailed comparison of the drifting variants. Beyond MMD, we also report geodesic Sinkhorn distance and 1-NN accuracy, as described in Appendix~\ref{app:exp-setup}. Together, these additional metrics provide a broader view of the relative behavior of the different drifting formulations and further illustrate the advantages of the gradient-based approach.

Overall, the results are consistent with the trends observed in the main text. In particular, the gradient-based variants tend to improve not only MMD, but also Sinkhorn distance and 1-NN performance across several datasets. This is especially clear for the Euclidean Laplace Gradient model on \textit{Volcano} and \textit{Earthquake}, and for the spherical gradient variants on \textit{Flood}. On \textit{Fire}, the picture is more mixed, but the gradient formulation still remains competitive across metrics. These results support the conclusion that the proposed reformulation leads to more reliable transport dynamics beyond a single evaluation criterion.

\begin{table*}[h!]
\centering
\small
\setlength{\tabcolsep}{6pt}
\renewcommand{\arraystretch}{1.15}
\begin{tabular}{lccc}
\toprule
\textbf{Method} 
& \multicolumn{3}{c}{\textbf{Volcano}} \\
\cmidrule(lr){2-4} 
& \textbf{MMD} $\downarrow$ & \textbf{Sink.} $\downarrow$ & \textbf{NN1} $\downarrow$\\
\midrule

Euclidean Laplace {\tiny \cite{Deng_2026_Drift}}
& $0.146 \pm 0.022$ & $0.352 \pm 0.035$ & $0.843 \pm 0.044$ \\
\midrule
Euclidean Laplace Gradient
& {$0.128 \pm 0.013$} & \best{$0.283 \pm 0.019$} & \best{$0.711 \pm 0.032$} \\
Euclidean Gaussian Gradient
& $0.143 \pm 0.035$ & $0.334 \pm 0.046$ & \second{$0.766 \pm 0.029$} \\
Spherical Laplace
& \third{$0.113 \pm 0.021$} & $0.320 \pm 0.025$ & $0.801 \pm 0.014$ \\
Spherical Laplace Gradient
& \second{$0.113 \pm 0.014$} & \third{$0.313 \pm 0.029$} & $0.780 \pm 0.026$ \\
Spherical Gaussian Gradient
& \best{$0.112 \pm 0.012$} & {$0.317 \pm 0.015$} & $0.790 \pm 0.018$ \\
Spectral ($\nu = 2.5$)
& {$0.126 \pm 0.022$} & \second{$0.309 \pm 0.031$} & \third{$0.769 \pm 0.017$} \\

\bottomrule
\end{tabular}
\caption{Additional comparison for Volcano dataset.
}
\label{tab:volcano-full}
\end{table*}

\begin{table*}[h!]
\centering
\small
\setlength{\tabcolsep}{6pt}
\renewcommand{\arraystretch}{1.15}
\begin{tabular}{lccc}
\toprule
\textbf{Method} 
& \multicolumn{3}{c}{\textbf{Earthquake}} \\
\cmidrule(lr){2-4} 
& \textbf{MMD} $\downarrow$ & \textbf{Sink.} $\downarrow$ & \textbf{NN1} $\downarrow$\\
\midrule
Euclidean Laplace {\tiny \cite{Deng_2026_Drift}}
& $0.044 \pm 0.013$ & \second{$0.157 \pm 0.014$} & \second{$0.694 \pm 0.016$} \\
\midrule
Euclidean Laplace Gradient
& \second{$0.038 \pm 0.009$} & \best{$0.151 \pm 0.011$} & \best{$0.686 \pm 0.018$}\\
Euclidean Gaussian Gradient
& $0.047 \pm 0.006$ & \third{$0.162 \pm 0.007$} & $0.706 \pm 0.008$ \\
Spherical Laplace
& {$0.043 \pm 0.002$} & $0.179 \pm 0.003$ & $0.732 \pm 0.003$ \\
Spherical Laplace Gradient
& $0.056 \pm 0.008$ & $0.184 \pm 0.009$ & \third{$0.703 \pm 0.005$} \\
Spherical Gaussian Gradient
& $0.158 \pm 0.070$ & $0.326 \pm 0.120$ & $0.707 \pm 0.016$ \\
Spectral ($\nu = 2.5$)
& \best{$0.037 \pm 0.006$} & $0.166 \pm 0.004$ & $0.704 \pm 0.012$ \\
\bottomrule
\end{tabular}
\caption{Additional comparison for Earthquake dataset.
}
\label{tab:earthquake-full}
\end{table*}

\begin{table*}[h!]
\centering
\small
\setlength{\tabcolsep}{6pt}
\renewcommand{\arraystretch}{1.15}
\begin{tabular}{lccc}
\toprule
\textbf{Method} 
& \multicolumn{3}{c}{\textbf{Fire}} \\
\cmidrule(lr){2-4} 
& \textbf{MMD} $\downarrow$ & \textbf{Sink.} $\downarrow$ & \textbf{NN1} $\downarrow$\\
\midrule
Euclidean Laplace {\tiny \cite{Deng_2026_Drift}}
& \third{$0.036 \pm 0.002$} & \second{$0.142 \pm 0.009$} & \third{$0.778 \pm 0.011$} \\
\midrule
Euclidean Laplace Gradient
& \second{$0.030 \pm 0.005$} & \best{$0.129 \pm 0.006$} & \best{$0.749 \pm 0.008$} \\
Euclidean Gaussian Gradient
& $0.048 \pm 0.020$ & {$0.166 \pm 0.015$} & $0.846 \pm 0.001$ \\
Spherical Laplace
& $0.049 \pm 0.006$ & $0.180 \pm 0.007$ & $0.809 \pm 0.011$ \\
Spherical Laplace Gradient
& $0.047 \pm 0.003$ & \third{$0.165 \pm 0.011$} & \second{$0.770 \pm 0.015$} \\
Spherical Gaussian Gradient
& $0.039 \pm 0.003$ & $0.220 \pm 0.003$ & $0.852 \pm 0.002$ \\
Spectral ($\nu = 2.5$)
& \best{$0.029 \pm 0.004$} & $0.170 \pm 0.008$ & $0.825 \pm 0.010$ \\
\bottomrule
\end{tabular}
\caption{Additional comparison for Fire dataset.
}
\label{tab:fire-full}
\end{table*}

\newpage
\begin{table*}[h!]
\centering
\small
\setlength{\tabcolsep}{6pt}
\renewcommand{\arraystretch}{1.15}
\begin{tabular}{lccc}
\toprule
\textbf{Method} 
& \multicolumn{3}{c}{\textbf{Flood}} \\
\cmidrule(lr){2-4} 
& \textbf{MMD} $\downarrow$ & \textbf{Sink.} $\downarrow$ & \textbf{NN1} $\downarrow$\\
\midrule
Euclidean Laplace {\tiny \cite{Deng_2026_Drift}}
& $0.064 \pm 0.010$ & $0.203 \pm 0.019$ & $0.692 \pm 0.021$ \\
\midrule
Euclidean Laplace Gradient
& $0.067 \pm 0.009$ & \third{$0.196 \pm 0.013$}& \best{$0.655 \pm 0.017$} \\
Euclidean Gaussian Gradient
& $0.072 \pm 0.001$ & $0.245 \pm 0.010$ & $0.841 \pm 0.040$ \\
Spherical Laplace
& \third{$0.064 \pm 0.005$} & $0.223 \pm 0.006$ & $0.713 \pm 0.007$ \\
Spherical Laplace Gradient
& \best{$0.053 \pm 0.004$} & \second{$0.193 \pm 0.014$} & \third{$0.672 \pm 0.020$} \\
Spherical Gaussian Gradient
& \second{$0.058 \pm 0.001$} & \best{$0.191 \pm 0.007$} & \second{$0.659 \pm 0.018$} \\
Spectral ($\nu = 2.5$)
& $0.070 \pm 0.009$ & $0.241 \pm 0.006$ & $0.724 \pm 0.010$ \\

\bottomrule
\end{tabular}
\caption{Additional comparison for Flood dataset.
}
\label{tab:flood-full}
\end{table*}

\section{Experimental Setup}
\label{app:exp-setup}

\paragraph{Hardware.}
All experiments are carried out on NVIDIA A100 GPUs.

\subsection{Synthetic experiments}

We consider two complementary synthetic setups: a kernel-smoothness ablation in Euclidean space and a geometry ablation on constant-curvature manifolds. All synthetic models are optimized with Adam using learning rate $10^{-3}$.

\paragraph{Kernel-smoothness ablation.}
For Fig.~\ref{fig:c2st-vs-nu}, we use a 2D swiss-roll target in $\mathbb{R}^2$ and sweep a Matérn kernel over
\[
\nu \in \{0.5, 1, 1.5, 2, 2.5, 5, 100\},
\]
treating $\nu=100$ as a finite approximation to the Gaussian limit. We compare displacement-based drift with our kernel-gradient drift. The backbone is a $(16,16)$ MLP with SiLU activations. We train for $3000$ steps with batch size $256$, kernel bandwidth $T=0.2$, and step cap $\eta_{\max}=1$.

\paragraph{Manifold toy examples.}
For Tab.~\ref{tab:toy-table} and Fig.~\ref{fig:toy-grid}, we use checkerboard and swiss-roll targets on $\mathbb{S}^2$ and $\mathbb{H}^2$. We compare a $2{\times}2$ design crossing
\[
\emph{geometry} \in \{\textsc{Euclidean}, \textsc{Manifold}\}
\qquad \text{and} \qquad
\emph{method} \in \{\textsc{Base}, \textsc{Gradient}\}.
\]
The \textsc{Euclidean} variants use ambient distances and additive updates, while the \textsc{Manifold} variants use geodesic distances and the exponential map. All variants use a (geodesic) Laplace kernel.
The backbone is a width-$256$, depth-$5$ MLP. We train for $2000$ steps with batch size $2048$ and gradient clipping at $1.0$. Full hyperparameter configurations are provided in the accompanying repository.

\paragraph{Metrics.}
For the kernel-smoothness ablation, we report classifier two-sample test (C2ST) accuracy. A small MLP is trained to distinguish real from generated samples; values close to $0.5$ indicate that the two distributions are difficult to distinguish. For the manifold toy examples, we report Sliced Wasserstein-2 distance (lower is better) and, on checkerboard targets, tile accuracy: the fraction of samples that, after being unwrapped to the source 2D chart, fall in a black square of the underlying $4{\times}4$ grid (higher is better). Sliced Wasserstein-2 is computed \emph{extrinsically} on the ambient embeddings ($\mathbb{R}^3$ for $\mathbb{S}^2$, and the Minkowski coordinates treated as $\mathbb{R}^3$ for $\mathbb{H}^2$), using random linear projections rather than an intrinsic manifold variant; since all compared methods produce samples in the same ambient space, this remains a fair head-to-head comparison. All results are averaged over $3$ seeds and reported as mean $\pm$ standard deviation.

\subsection{Geospatial data on the sphere.}

\paragraph{Data.}
We evaluate our method on the Earth benchmark introduced by \citet{Mathieu_2020_Earth}, which consists of geospatial event distributions supported on the sphere $\mathbb{S}^2$. We consider the four standard datasets: \textit{earthquake}, \textit{volcano}, \textit{fire}, and \textit{flood}. Following \citet{Woo_2026_RMeanFlow}, we use a fixed random split with $80\%$ of the data for training, $10\%$ for validation, and $10\%$ for testing.


\paragraph{Optimization.}
We train all models for up to $20$ epochs, or until earlier convergence, using AdamW with $\beta_1=0.9$, $\beta_2=0.95$, and weight decay $10^{-2}$. The learning rate is linearly warmed up during the first $2500$ optimization steps and kept constant afterwards. We maintain an exponential moving average (EMA) of model parameters with decay $0.999$, and report results using the EMA model. Gradients are clipped to norm $1.0$. A single run on an A100 GPU takes approximately $20$ minutes.

\paragraph{Metrics.}
Our primary selection metric is a kernel two-sample statistic on $\mathbb{S}^2$ computed with the geodesic Gaussian kernel $k(x,y) = \exp\!\bigl(-d_{\mathbb{S}^2}(x,y)^2\bigr)$. As discussed in Section~\ref{sec:valid_kernels}, exponential-of-squared-geodesic kernels are not positive-definite on curved manifolds in general, so this statistic is not a strict MMD; we use it as a smooth discrepancy for relative comparison between methods. Empirically, all reported values are non-negative across runs, and rankings are consistent with the geodesic Sinkhorn distance and 1-NN two-sample accuracy, which do not rely on a positive-definite kernel. The Sinkhorn metric uses entropically regularized optimal transport between empirical real and generated distributions, with geodesic cost on the sphere. For 1-NN accuracy, we pool generated and real samples, assign each point the label of its nearest neighbor under geodesic distance, and report the resulting classification accuracy; values closer to $50\%$ indicate that the two distributions are harder to distinguish.

\subsection{Promoter DNA generation}

\paragraph{Data.}
We follow the promoter design benchmark introduced by \citet{avdeyev2023dirichletdiffusionscoremodel}. The task is conditional generation of human promoter sequences of length $L=1024$ over the nucleotide alphabet $\{A,C,G,T\}$. Each sequence is conditioned on a per-position regulatory signal track derived from CAGE transcription start site profiles, denoted \texttt{signal\_1c}. This signal corresponds to the plus-strand CAGE BigWig from the FANTOM CAT release, with strand normalization applied: for negative-strand TSSs, both the sequence and signal are reverse-complemented so that all examples are presented in a consistent orientation.

The dataset consists of the top $100$k most highly expressed transcription start sites, as ranked in the FANTOM CAT annotation. Following \citet{Woo_2026_RMeanFlow}, we use a chromosome-based split: chromosomes $8$ and $9$ for testing, chromosome $10$ for validation, and all remaining autosomes for training.

\paragraph{DNA representation.}
For the Euclidean drift model, ground-truth sequences are represented as one-hot vectors in $\mathbb{R}^{L \times 4}$, while generated samples are relaxed categorical distributions on the product simplex $\Delta_3^L$. For the spherical drift model, we map each categorical distribution to the positive orthant of $\mathbb{S}^3$ using the square-root embedding $p \mapsto \sqrt{p}$. Generation therefore takes place on the product manifold $(\mathbb{S}^3_+)^L$.

\paragraph{Architecture and optimization.}
We use the same dilated 1D-CNN backbone as \citet{Woo_2026_RMeanFlow}, removing time-conditioning components since drifting models do not require a time variable. The network consists of $20$ residual blocks with channel width $256$, GroupNorm, SiLU activations, and dilation schedule $[1,1,4,16,64]\times 4$. This is preceded by a kernel-size $9$ input convolution and followed by two $1{\times}1$ output convolutions that produce per-position logits over the four nucleotides. The same backbone is used for all drifting formulations; geometry-specific operations are applied outside the network to the projected samples. Training uses Adam with an exponential moving average (EMA) of decay $0.999$, and all reported metrics are computed on the EMA model. A single run on an A100 GPU takes approximately one hour.

\paragraph{Metrics.}
We evaluate generated promoters using $6$-mer Pearson correlation, following \citet{avdeyev2023dirichletdiffusionscoremodel}. We aggregate $6$-mer counts over the generated and reference corpora, normalize them to relative frequencies, and compute Pearson correlation over the union of observed $6$-mers. Generated samples are discretized via per-position $\arg\max$ before evaluation. This metric measures whether generated sequences reproduce the local sequence composition of real promoters and serves as our primary model-selection criterion.

\subsection{Molecular generation on QM9}

\paragraph{Data.}
We evaluate unconditional molecular generation on QM9 ~\citep{Wu_2018_MoleculeNet}. The task is to generate small molecules represented as SMILES strings, which are then parsed into molecular graphs for evaluation. We follow the same preprocessing and train/validation/test split protocol as \citet{park2026pairflow}.

\paragraph{Molecule representation.}
Although molecules are naturally represented as graphs, applying our framework directly to graph representations is complicated by permutation symmetry: a single molecule admits many equivalent node orderings, and the drifting objective would treat these isomorphic copies as distinct points to be pushed apart. Resolving this would require permutation-aware neighborhoods, which we leave to future work. Here, following \citet{park2026pairflow}, we sidestep this difficulty and operate on SMILES strings, viewing each molecule as a categorical sequence. Note that this is a strictly harder setting than most graph-based approaches, which typically condition on the number of atoms; SMILES generation is unconditional on molecule size.

Concretely, each molecule is represented as a SMILES string of fixed length \(N=32\) over a vocabulary \({V}\) of size \(K=40\), padded when necessary. Generation therefore takes place over \({V}^N\). The Euclidean drift model operates on one-hot vectors in \(\mathbb{R}^{N \times K}\), with generated samples relaxed to the product simplex \((\Delta^{K-1})^N\). The spherical drift model maps each per-position categorical distribution to the positive orthant of \(\mathbb{S}^{K-1}\) via the square-root embedding \(p \mapsto \sqrt{p}\), so generation takes place on the product manifold \((\mathbb{S}^{K-1}_{+})^N\).

\paragraph{Architecture and optimization.}
We adopt the graph backbone and training protocol of \citet{park2026pairflow}, adapting the flow-based model to the one-step drifting objective. The network predicts relaxed categorical distributions for SMILES, while geometry-specific operations are applied outside the backbone.At evaluation, generated strings are discretized by taking the per-variable \(\arg\max\).

\paragraph{Metrics.}
We report validity and uniqueness. Validity is the fraction of generated samples that yield a fully sanitizable RDKit molecule that can be converted to SMILES and uniqueness is the fraction of distinct molecules among the valid samples, identified by the canonical SMILES of the largest connected fragment. 

\newpage

\end{document}